\documentclass[11pt]{article}

\usepackage[T1]{fontenc}
\usepackage[utf8]{inputenc}
\usepackage[margin=1in]{geometry}
\usepackage{amsmath,amssymb,mathtools,amsthm}
\usepackage{graphicx}
\usepackage{microtype}
\usepackage[round,authoryear]{natbib}
\usepackage[hidelinks]{hyperref}

\title{The Approximation Rank of Softmax Attention:\\
Sharp Geometric Laws and Robust Interaction Dimension}
\author{
Yuhe Sui\\
\small Nanyang Technological University, Singapore\\
\small \href{https://orcid.org/0009-0002-7456-0804}{ORCID: 0009-0002-7456-0804}
\and
Jianing Zhang\\
\small Carnegie Mellon University\\
\small \href{https://orcid.org/0009-0004-3810-9316}{ORCID: 0009-0004-3810-9316}
}
\date{}

\newtheorem{theorem}{Theorem}
\newtheorem{appendixtheorem}{Theorem}[section]
\newtheorem{appendixlemma}[appendixtheorem]{Lemma}
\newtheorem{appendixproposition}[appendixtheorem]{Proposition}
\newtheorem{appendixcorollary}[appendixtheorem]{Corollary}

\hypersetup{
  pdftitle={The Approximation Rank of Softmax Attention: Sharp Geometric Laws and Robust Interaction Dimension},
  pdfauthor={Yuhe Sui; Jianing Zhang}
}

\begin{document}
\maketitle

\emergencystretch=1.5em

\begin{abstract}
Which geometry controls the rank complexity of normalized softmax attention?
We study maximum-row-$\ell_1$ approximation rank, exactly the least unrestricted
rank preserving every bounded vector-valued output.  Two sharp worst-case laws
isolate support geometry: for fixed $d$ and error $\varepsilon$, spherical
self-attention has rank
$\Theta_{d,\varepsilon}(\min\{n,(1+\beta)^{(d-1)/2}\})$, while full-ball
geometry adds one radial degree and, for
$\beta\ge\beta_0(d,\varepsilon)$ and $n\ge C_de^{\beta/8}$, gives
$\Theta_{d,\varepsilon}(\beta^{d/2})$.  For a fixed head, row-softmax
quotients out row-scalar logit directions: the remaining visible query--key
interaction dimension $r$ yields an $r/2$ per-instance upper law, and bounded
constructions show this exponent is minimax sharp.  Approximate interaction
subspaces incur an explicit residual output error and yield a tolerance-indexed
SVD dimension.  On an 84-head BERT-base calibration set, we observe modest effective-dimension
reductions across many head--temperature settings, together with positive
associations with finite constructive rank upper certificates.  Together, these
results separate support geometry, which sets worst-case temperature scaling,
from softmax-visible interaction geometry, which controls per-head approximation
complexity.
\end{abstract}

\paragraph{Keywords.} softmax attention, approximation rank, representation geometry, effective dimension, convex geometry

\section{Introduction}

Softmax attention is usually described through ambient head dimension, the
query--key map, or spectra of the resulting attention matrix.  Row
normalization, however, removes entire logit directions, while the geometry on
which queries and keys vary controls how sharply the associated Gibbs family
can localize.  This raises an intrinsic approximation question: how much real
matrix rank is required to preserve the action of \emph{normalized} attention
on every bounded value vector, and which geometric dimension should govern that
rank?

We study maximum-row-$\ell_1$ approximation rank, whose norm has an exact
operator interpretation for arbitrary normed value spaces.  The theory reveals
two complementary geometric roles.  First, support geometry controls worst-case
temperature complexity: spherical self-attention has exponent $(d-1)/2$,
whereas allowing one radial degree in the full ball gives $d/2$ in the stated
large-token regime.  Second, for a fixed head, row-softmax quotients away
row-scalar logit directions.  The remaining visible query--key interaction
dimension $r$ yields an $r/2$ upper law, and a bounded construction shows that
this exponent is minimax sharp.  A projective softmax perturbation bound then
extends the result to approximate interaction subspaces and leads to a
reproducible SVD-based effective dimension.

Our main contributions are therefore:
\begin{itemize}
  \item sharp sphere and full-ball temperature laws for output-preserving
  approximation rank;
  \item an exact softmax-visible interaction quotient with a minimax-sharp
  $r/2$ exponent, together with a robust approximate-subspace extension; and
  \item synthetic checks of the lower constructions and a fixed 84-head
  BERT-base calibration connecting the effective dimension to finite
  constructive rank upper certificates.
\end{itemize}

The broad principle that intrinsic geometry can reduce kernel rank is known
\citep{altschulerparrilo,budzinskiy}, and rigorous attention thinning,
coreset, and related approximations study more restricted or algorithmic
representation classes
\citep{lowrankthinning,wildcat,coresets}.  Query--key SVD and row-centred
softmax spectra provide neighboring notions of interaction structure
\citep{pan2024,leecompress,leeinvariants}.  The narrower object here is the
unrestricted rank of the \emph{normalized} attention operator under uniform
bounded-value output error, together with sharp temperature laws and the
row-softmax-visible quotient.  Appendix~\ref{app:related} gives a more detailed
comparison.

\section{Sharp geometric laws for output-preserving attention rank}

\subsection{Approximation rank and operator interpretation}

For bounded queries and keys, let
\[
A_{ij}=\frac{e^{\beta q_i^\top k_j}}{\sum_\ell e^{\beta q_i^\top k_\ell}},
\qquad
r_\varepsilon(A)=\min\{\operatorname{rank}(B):
\max_i\|A_{i\cdot}-B_{i\cdot}\|_1\le\varepsilon\}.
\]
For every nonzero normed value space $E$,
\[
\max_i\|A_{i\cdot}-B_{i\cdot}\|_1
=\sup_{\max_j\|v_j\|_E\le1}\max_i
 \left\|\sum_j(A_{ij}-B_{ij})v_j\right\|_E.
\]
Thus $r_\varepsilon(A)$ is exactly the least unrestricted real rank that
preserves every bounded vector-valued output to error $\varepsilon$.

\subsection{Sphere and full-ball laws}

First consider spherical self-attention
$A_\beta(X)_{ij}\propto e^{\beta\langle x_i,x_j\rangle}$ for
$X=(x_1,\ldots,x_n)\in(\mathbb S^{d-1})^n$, and define
\[
\mathfrak R_\varepsilon^{\mathrm{sph}}(n,d,\beta)
=\sup_{X\in(\mathbb S^{d-1})^n}r_\varepsilon(A_\beta(X)).
\]
Then, for fixed $d\ge2$ and $0<\varepsilon<1$,
\[
\boxed{
\mathfrak R_\varepsilon^{\mathrm{sph}}(n,d,\beta)
\asymp_{d,\varepsilon}\min\{n,(1+\beta)^{(d-1)/2}\}.}
\tag{1}
\]
For self-attention configurations in $B_2^d$, let
$\mathfrak R_\varepsilon(n,d,\beta)$ denote the corresponding supremum.  The
finite lower theorem gives
\[
\mathfrak R_\varepsilon(n,d,\beta)
\gtrsim_{d,\varepsilon}
\min\!\left\{n,
\left(1+\min\{\beta,\log_+(n/C_d)\}\right)^{d/2}\right\}
\tag{2}
\]
for $\beta\ge1$ after fixed small-case constants, while the general upper bound
is $\min\{n,C'_d(1+\beta/\varepsilon^2)^{d/2}\}$.  Hence, when
\[
\boxed{\beta\ge\beta_0(d,\varepsilon),\qquad n\ge C_de^{\beta/8},}
\tag{3}
\]
\[
\boxed{
\mathfrak R_\varepsilon(n,d,\beta)
\asymp_{d,\varepsilon}\beta^{d/2}.}
\tag{4}
\]
Thus one radial degree of freedom changes the temperature exponent by exactly
$1/2$.  The match in (4) is a worst-case state-capacity result because of the
large-token condition (3).  Appendices~\ref{app:sphere} and
\ref{app:fullball} give the proofs.

\begin{figure}[t]
\centering
\includegraphics[width=0.92\linewidth]{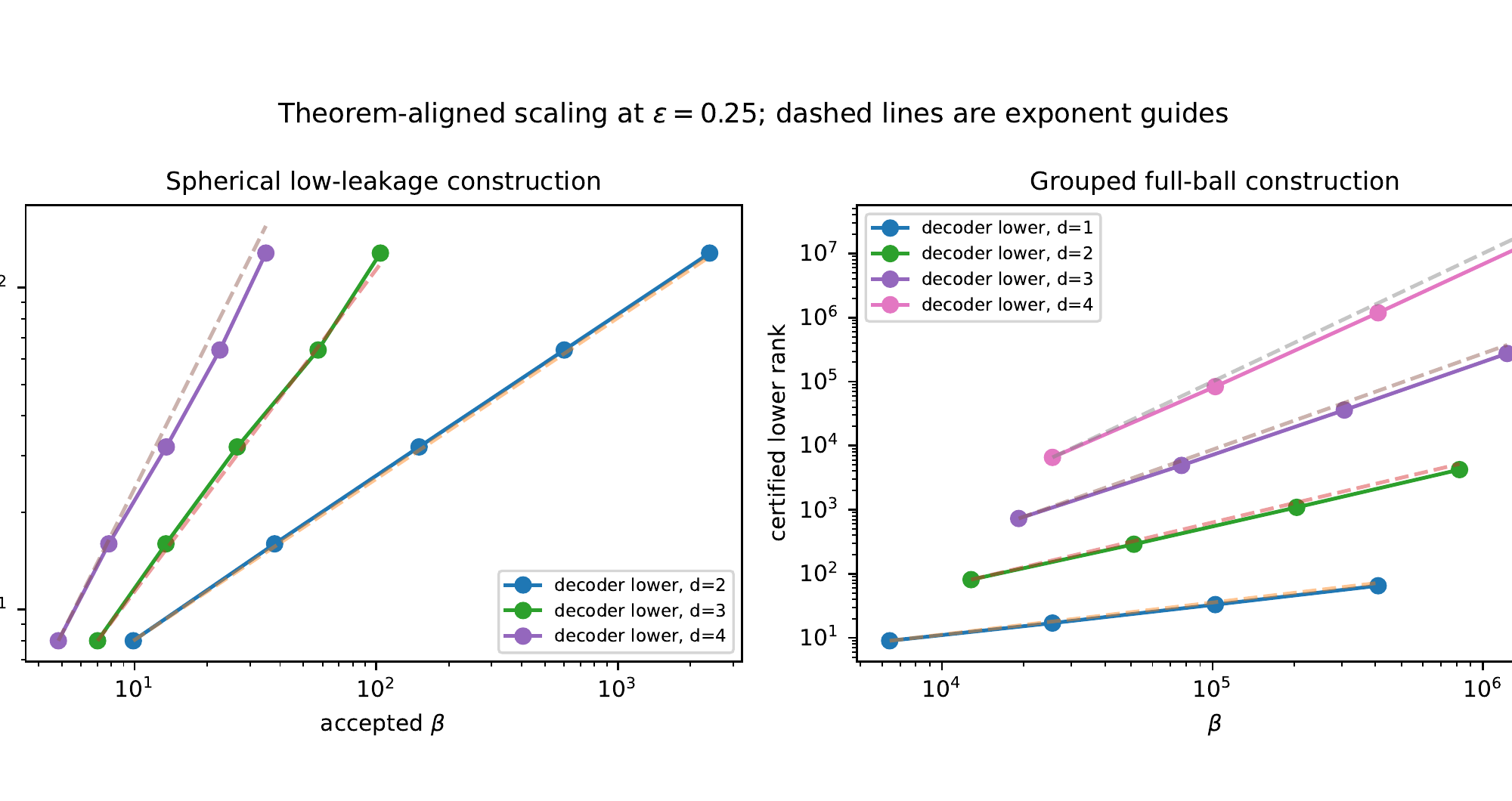}
\caption{Support geometry changes temperature complexity.  Synthetic decoder counts are lower certificates, not estimates of $r_\varepsilon$; dashed lines show the $(d-1)/2$ sphere and $d/2$ full-ball theorem exponents.}
\label{fig:scaling}
\end{figure}

\section{Softmax-visible interaction geometry}

\subsection{Exact quotient and sharp interaction law}

The worst-case laws above still use ambient support dimension.  For a fixed
head, row-normalized attention can see less.  Let
\[
U=\operatorname{span}\{q_i-q_1\},\quad
s_j=P_U(k_j-k_1),\quad
V=\operatorname{span}\{s_j\},\quad r=\dim V,
\]
\[
\rho_Q=\inf_{a\in V}\max_i\|P_V(q_i-q_1)-a\|_2,
\qquad
\rho_K=\inf_{c\in V}\max_j\|s_j-c\|_2,
\qquad \Gamma=\beta\rho_Q\rho_K.
\]
Directions in $U\cap V^\perp$ add the same scalar to every logit in a row and
therefore disappear under softmax.  After recentering, key-only terms become
common positive base weights.  The resulting $r$-dimensional weighted Gibbs
family satisfies
\[
\boxed{
 r_\varepsilon(A)\le
 \min\{n,C_r(1+\Gamma/\varepsilon^2)^{r/2}\}.}
\tag{5}
\]
If $r=0$ or $\rho_Q\rho_K=0$, all rows coincide and rank one is exact.
Appendix~\ref{app:interaction-sharp} constructs bounded instances in
$\mathbb R^{r+1}$ with exact visible dimension $r$,
$\Gamma\asymp_r\beta$, $n\asymp_{r,\varepsilon}\beta^{r/2}$, and
$r_\varepsilon(A)\gtrsim_{r,\varepsilon}\beta^{r/2}$; hence the $r/2$
exponent is minimax sharp.

The upper mechanism is a weighted Gibbs-row cover, uniform in alphabet size and
all positive base weights: convex-support gaps control KL divergence and a
fixed-dimensional polytope approximation supplies the exponent; hemisphere
lifting gives the full-ball law (Appendix~\ref{app:gibbs}).

\subsection{Robust approximate interaction subspaces}

Learned heads can have full algebraic interaction rank even when most visible
variation is concentrated near a smaller subspace.  Let $W\subseteq U$ have
dimension $r$, $P=P_W$, $u_i=q_i-q_1$,
$\xi_i=(I-P)u_i$, and $\zeta_j=(I-P)s_j$.  Define
\[
\rho_{Q,W}=\inf_{a\in W}\max_i\|Pu_i-a\|_2,
\quad
\rho_{K,W}=\inf_{c\in W}\max_j\|Ps_j-c\|_2,
\quad
\Gamma_W=\beta\rho_{Q,W}\rho_{K,W},
\]
\[
\eta_W=\frac\beta2\inf_{a_\perp\in U\cap W^\perp}
\max_i\operatorname{osc}_j\langle\xi_i-a_\perp,\zeta_j\rangle,
\qquad
\tau_W=2\tanh(\eta_W/2).
\]
The projective softmax inequality
\[
\|\operatorname{softmax}(x+h)-\operatorname{softmax}(x)\|_1
\le2\tanh(\operatorname{osc}(h)/4)
\]
is classical \citep{cohenfausti}.  It converts discarded interaction into a
uniform row error, after which the weighted Gibbs cover applies to the
projected geometry.

\begin{theorem}[Robust interaction geometry]
If $\tau_W<\varepsilon$, then
\[
\boxed{
 r_\varepsilon(A)\le
 \min\left\{n,C_r\left(1+
 \frac{\Gamma_W}{(\varepsilon-\tau_W)^2}\right)^{r/2}\right\}.}
\tag{6}
\]
For $r=0$ or $\Gamma_W=0$, rank one is valid whenever $\tau_W\le\varepsilon$.
\end{theorem}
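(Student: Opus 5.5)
The plan is to decompose the logits, discard the small part at a uniform cost, and cover what remains. First I write the logit matrix in a recentred form that isolates what each row's softmax can see. With $u_i=q_i-q_1$ and $k_j-k_1$, the row-$i$ logits are $\beta q_i^\top k_j=\beta q_1^\top k_j+\beta u_i^\top (k_j-k_1)+(\text{row constant})$. The term $\beta q_1^\top k_j$ depends only on $j$, so it becomes a positive base weight $w_j=e^{\beta q_1^\top k_j}$. Since $u_i\in U$, we have $u_i^\top(k_j-k_1)=u_i^\top s_j$. Splitting $u_i=Pu_i+\xi_i$ and $s_j=Ps_j+\zeta_j$ with $P=P_W$ and $W\subseteq U$, the cross terms vanish, giving
\[
u_i^\top s_j=\langle Pu_i,Ps_j\rangle+\langle\xi_i,\zeta_j\rangle .
\]
So row $i$ of $A$ is $\operatorname{softmax}\bigl(\log w_j+\beta\langle Pu_i,Ps_j\rangle+\beta\langle\xi_i,\zeta_j\rangle\bigr)$.

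Next I remove the discarded interaction. Fix any $a_\perp\in U\cap W^\perp$. Because $\zeta_j\in U\cap W^\perp$ as well, replacing $\xi_i$ by $\xi_i-a_\perp$ changes each row only by adding $\beta\langle a_\perp,\zeta_j\rangle$, which is a column-only term. I absorb it into the base weights by setting $w'_j=w_j e^{\beta\langle a_\perp,\zeta_j\rangle}$, still positive. Then
\[
A_{i\cdot}=\operatorname{softmax}(x_i+h_i),\qquad x_{ij}=\log w'_j+\beta\langle Pu_i,Ps_j\rangle,\qquad h_{ij}=\beta\langle\xi_i-a_\perp,\zeta_j\rangle .
\]
Let $\tilde A_{i\cdot}=\operatorname{softmax}(x_i)$. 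The quoted projective inequality gives $\|A_{i\cdot}-\tilde A_{i\cdot}\|_1\le 2\tanh(\operatorname{osc}_j h_{ij}/4)$. Taking the max over $i$ and the infimum over $a_\perp$ (the infimum is attained or approached, and $\tanh$ is continuous and monotone), this is at most $2\tanh(\eta_W/2)=\tau_W$, since $\operatorname{osc}(h_i)/4=\eta_W/2$ at the optimizer.

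Then I cover the projected family. The rows of $\tilde A$ form an $r$-dimensional weighted Gibbs family: queries $Pu_i$ and keys $Ps_j$ in $W\cong\mathbb R^r$. Recentring the queries by $a$ and the keys by $c$ again only shifts row constants and base weights. I take $a,c$ attaining $\rho_{Q,W}$ and $\rho_{K,W}$ (by compactness of the finite max). This leaves queries in a ball of radius $\rho_{Q,W}$, keys in a ball of radius $\rho_{K,W}$, and effective inverse temperature product $\Gamma_W$. I then apply the weighted Gibbs-row cover of Appendix~\ref{app:gibbs}, which is the mechanism behind (5) and is uniform in alphabet size and in positive base weights, at accuracy $\varepsilon-\tau_W>0$. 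This yields $B$ of rank at most $\min\{n,C_r(1+\Gamma_W/(\varepsilon-\tau_W)^2)^{r/2}\}$ with $\max_i\|\tilde A_{i\cdot}-B_{i\cdot}\|_1\le\varepsilon-\tau_W$. The triangle inequality then gives error at most $\varepsilon$. The bound by $n$ is trivial since $B=A$ works.

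For the degenerate case, if $r=0$ or $\Gamma_W=0$, then after recentring $\beta\langle Pu_i-a,Ps_j-c\rangle\equiv0$. All rows of $\tilde A$ then equal $\operatorname{softmax}(\log w'')$, so $\tilde A$ has rank one and error $\tau_W\le\varepsilon$.

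The main obstacle is making sure the Gibbs cover of Appendix~\ref{app:gibbs} is invoked in exactly the form "queries within radius $\rho_Q$, keys within radius $\rho_K$, arbitrary positive weights, error $\delta$, rank $C_r(1+\beta\rho_Q\rho_K/\delta^2)^{r/2}$". In particular, the bilinear cross terms from recentring must be checked carefully:
\[
\langle Pu_i-a+a,\,Ps_j-c+c\rangle .
\]
Here $\langle a,Ps_j\rangle$ is column-only and goes into the weights, while $\langle Pu_i,c\rangle$ is row-only and cancels under softmax. A scaling argument then reduces the radii to the product $\Gamma_W$.
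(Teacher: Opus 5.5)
Your proposal is correct and follows the paper's proof essentially step for step. It uses the same orthogonal splitting of $\langle u_i,s_j\rangle$ with centres $a,c\in W$ and $a_\perp\in U\cap W^\perp$; this is the paper's (B.7), which you assemble in two stages. You then absorb key-only terms into positive base weights and drop row-only terms under softmax, apply the projective perturbation lemma to get row error $\tau_W$, invoke the weighted full-ball lift with $\gamma=\Gamma_W$ at accuracy $\varepsilon-\tau_W$, and finish with the triangle inequality.

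One small tightening concerns the rank-one clause at the boundary $\tau_W=\varepsilon$, where ``approached'' is not enough. The infimum over $a_\perp$ is in fact attained. The objective depends only on the component of $a_\perp$ in $\operatorname{span}\{\zeta_j-\zeta_\ell\}$, and it is convex, continuous and coercive there, as the paper notes after (B.4).
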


\subsection{SVD bridge and effective interaction dimension}

For a reproducible nested family, mean-centre the visible-key matrix $Z$ and
let $W_r$ be its top-$r$ right-singular subspace.  With
$E_r=Z(I-P_{W_r})$,
\[
\max_j\|E_{r,j}\|_2\le\sigma_{r+1}(Z),\qquad
\tfrac12\operatorname{diam}\{E_{r,j}\}\le\sigma_{r+1}(Z)/\sqrt2.
\]
Writing
$\rho_{Q,U}=\inf_{a\in U}\max_i\|u_i-a\|_2$ gives the spectrum-only bound
\[
\tau_r^{\mathrm{spec}}=2\tanh\!\left(
\frac{\beta\rho_{Q,U}\sigma_{r+1}(Z)}{2\sqrt2}\right).
\tag{7}
\]
Directly verifying residual query radius $\delta_{Q,r}$ and key half-diameter
$\omega_{K,r}$ gives
\[
\tau_r^{\mathrm{width}}
=2\tanh(\beta\delta_{Q,r}\omega_{K,r}/2)
\le\tau_r^{\mathrm{spec}},\qquad
r_{\mathrm{int}}^{\mathrm{SVD}}(\tau)
=\min\{r:\tau_r^{\mathrm{width}}\le\tau\}.
\tag{8}
\]
The tolerance in (8) is an output-error tolerance, not explained variance.
Top-$r$ SVD supplies a reproducible family but need not minimize
$\delta_{Q,W}\omega_{K,W}$.  Weight-space $Q$--$K$ SVD is used for mechanistic
analysis by \citet{pan2024}, while row-centred logit spectra and SVD-to-softmax
fidelity are studied by \citet{leecompress,leeinvariants}.  Here token-level
interaction residuals are mapped through the Gibbs cover to approximation rank.

\section{Numerical checks and learned-head calibration}

\subsection{Synthetic scaling constructions}

Figure~\ref{fig:scaling} checks the lower constructions over their tested
regimes: fitted sphere slopes are $0.504,1.012,1.381$ versus
$0.5,1,1.5$, and full-ball slopes are $0.476,0.951,1.427,1.903$ versus $d/2$.
The largest grouped full-ball point represents $17{,}850{,}625$ message states
and an effective token count about $10^{88{,}946}$, underscoring that (4) is a
worst-case state-capacity law rather than an ordinary context-length prediction.

\subsection{BERT-base calibration}

For a learned-representation calibration, we evaluate 84 BERT-base heads at
lengths $64,128,256$, five temperature multipliers, and errors
$0.15,0.25,0.35$.  Reconstructed attention agrees with the model to
$1.57\times10^{-6}$.  Exact visible dimensions are $63$--$64$; at
$\varepsilon=0.25$, $41.9\%$ of head--temperature cells have a strictly
smaller $r_{\mathrm{int}}^{\mathrm{SVD}}(\varepsilon/3)$, although the
reduction is usually modest.  At multiplier one, the effective dimension has
Spearman association $0.574$ with the attention-SVD rank upper certificate and
$0.606$ with the representative-row upper certificate.  We treat these 84
heads as a fixed calibration set, not a population sample; the associations
concern finite constructive upper certificates rather than the unknown optimum
$r_\varepsilon$.

\section{Discussion and limitations}

The results separate two geometric roles.  Support shape determines the
worst-case temperature scaling of normalized attention, while the row-softmax
quotient identifies the interaction geometry that is actually visible in a
particular head.  The former is a worst-case state-capacity statement; the
latter gives a per-instance notion that can be relaxed continuously through the
projective residual.

The analytic $C_r$ in (5)--(6) can carry $2^{O(r)}$ dependence
\citep{aryacover}, so it is a symbolic shape law rather than a useful numerical
certificate at $r\approx50$--$64$; finite certificates here are directly
evaluated approximants.  Equation~(4) requires the extreme-token condition
(3), and the BERT study calibrates one model rather than establishing a
universal learned-head law.  Approximation rank alone implies neither sparsity
nor arithmetic speedup \citep{alman}.

\clearpage
\appendix
\section{Weighted Gibbs covers with arbitrary positive base weights}
\label{app:gibbs}

We give the weighted-cover proof in full.  Its constant is uniform in the
alphabet size and in all positive base weights, including arbitrarily small
$\min_j\pi_j$.

\paragraph{Vector-valued operator identity.}
Let $E$ be any nonzero real normed space and let $M\in\mathbb R^{m\times N}$.
Then
\[
\sup_{\max_j\|v_j\|_E\le1}\max_i
 \left\|\sum_{j=1}^N M_{ij}v_j\right\|_E
 =\max_i\sum_{j=1}^N|M_{ij}|.
\tag{A.0}
\]
The triangle inequality gives the upper bound.  For the reverse inequality,
choose a row $i_\star$ attaining the maximum, fix $e\in E$ with $\|e\|_E=1$,
and take $v_j=\operatorname{sgn}(M_{i_\star j})e$ (with an arbitrary sign
when the entry is zero).  The $i_\star$ output then has norm
$\sum_j|M_{i_\star j}|$.  Thus maximum-row-$\ell_1$ error is exactly the
induced error on bounded vector-valued attention outputs, independently of
the value dimension and norm.

\begin{appendixtheorem}[Weighted spherical Gibbs cover]
Fix $m\ge2$, $0<\varepsilon<1$, $\gamma\ge0$, keys
$k_1,\ldots,k_N\in B_2^m$, and arbitrary weights $\pi_j>0$ with
$\sum_j\pi_j=1$.  For $x\in\mathbb S^{m-1}$ let
\[
p_x(j)=\frac{\pi_j e^{\gamma\langle x,k_j\rangle}}
 {\sum_{\ell=1}^N\pi_\ell e^{\gamma\langle x,k_\ell\rangle}}.
\]
There is an $\ell_1$ cover of $\{p_x:x\in\mathbb S^{m-1}\}$ by members of
the same family with cardinality
\[
M\le C_m\left(1+\frac{\gamma}{\varepsilon^2}\right)^{(m-1)/2},
\]
where $C_m$ depends only on $m$, uniformly in $N$ and in the positive weight
vector $\pi$.
\end{appendixtheorem}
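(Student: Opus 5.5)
The plan is to bound $\ell_1$ distances between Gibbs rows by KL divergences via Pinsker, and to bound those KL divergences by gaps of the support function of the key set. Let $K=\operatorname{conv}\{k_j\}\subseteq B_2^m$ with support function $h_K(x)=\max_j\langle x,k_j\rangle$. Write $F(x)=\log\sum_j\pi_je^{\gamma\langle x,k_j\rangle}$, which is convex with $\nabla F(x)=\gamma\,\mathbb E_{p_x}[k]$. The standard exponential-family identity gives
\[
\mathrm{KL}(p_x\|p_y)=F(y)-F(x)-\langle y-x,\nabla F(x)\rangle,
\]
a Bregman divergence of $F$. Pinsker then gives $\|p_x-p_y\|_1^2\le 2\,\mathrm{KL}(p_x\|p_y)$. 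So it suffices to produce $M$ centers $y_1,\dots,y_M\in\mathbb S^{m-1}$ such that every $x$ has some $y_t$ with Bregman gap at most $\varepsilon^2/2$. This bound must be uniform in $N$ and $\pi$, so it cannot use $\min_j\pi_j$.

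To remove the dependence on the weights, I will use the mean point $\mu_x=\mathbb E_{p_x}[k]\in K$ and the linear-in-$y$ form of the divergence. Writing $\mathrm{KL}(p_x\|p_y)=\mathbb E_{p_x}[\log p_x/p_y]$, one checks
\[
\mathrm{KL}(p_x\|p_y)=\gamma\langle x-y,\mu_x\rangle+\log\mathbb E_{p_x}\bigl[e^{\gamma\langle y-x,k\rangle}\bigr].
\]
Here $\pi$ enters only through the measure $p_x$ on $K$. With $v=y-x$, the quantity $\log\mathbb E_{p_x}e^{\gamma\langle v,k-\mu_x\rangle}$ is bounded above by $\gamma\,(h_K(v)-\langle v,\mu_x\rangle)$. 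This uses only that $p_x$ is supported in $K$, so it is uniform in $\pi$ and $N$.

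The naive consequence is $\mathrm{KL}\le 2\gamma\|x-y\|$. That gives a $\delta$-net with $\delta\sim\varepsilon^2/\gamma$, hence $M\sim(\gamma/\varepsilon^2)^{m-1}$, which has the wrong exponent. To get $(m-1)/2$ I need a second-order estimate. For $y$ close to $x$, the relevant quantity is the width of $K$ in direction $v$ measured from $\mu_x$, and $\mu_x$ concentrates near the face of $K$ exposed by $x$. I will therefore combine the Hoeffding-type bound
\[
\log\mathbb E e^{\gamma\langle v,k-\mu_x\rangle}\le \tfrac{\gamma^2}{8}\operatorname{osc}_K(v)^2
\]
with the Bregman upper bound $\gamma\,[h_K(y)-h_K(x)-\langle y-x,\mu_x\rangle]$ plus a $\log$-partition correction. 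This reduces the problem to covering the sphere so that the support-function gap $h_K(y)-\langle y,\mu\rangle$ is at most $c\varepsilon^2/\gamma$, for some $\mu$ in the face of $K$ exposed by $x$. This is the polytope-approximation step mentioned in the text. By the Dudley / Bronshtein–Ivanov theorem (the fixed-dimensional polytope approximation with $2^{O(m)}$ constant cited as \citep{aryacover}), any convex body in $B_2^m$ admits a $\delta$-approximating set of directions $y_1,\dots,y_M$ with $M\le C_m\delta^{-(m-1)/2}$. Taking $\delta=c\varepsilon^2/\gamma$ gives the claimed count, and the $1+$ term handles $\gamma\lesssim\varepsilon^2$, where a single center suffices by the naive bound.

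The main obstacle is this middle step: showing that the Bregman gap is controlled by the convex-support gap $h_K(y)-\langle y,\mu_x\rangle$ rather than by $\|x-y\|$. The difficulty is that $\mu_x$ lies in the interior of $K$ and is tilted by arbitrary $\pi$. My first attempt will be a "temperature splitting" argument. Since $\mathbb E_{p_x}\langle x,k\rangle$ and $h_K(x)$ differ by an amount controlled by an entropy term, which is unbounded in $\pi$, I will instead compare $p_x$ and $p_y$ directly through the $\pi$-free ratio $p_y/p_x\propto e^{\gamma\langle y-x,k\rangle}$. I will then choose each center $y$ via Dudley's construction, so that for every $x$ in its cap the set $\{k\in K:\langle y-x,k\rangle\ge \max_K\langle y-x,\cdot\rangle-\delta\}$ contains the region where $p_x$ places its mass. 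If this fails, the fallback is to lose a logarithmic factor and absorb it into the $1+\gamma/\varepsilon^2$ form only when $m$ is fixed.
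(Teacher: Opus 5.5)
There is a genuine gap, located at the step you yourself call the main obstacle. The argument needs an inequality of the form $\mathrm{KL}\le\gamma\cdot(\text{support gap})$ for some convex body to which Dudley/Bronshtein--Ivanov applies with constants independent of $N$ and $\pi$. You never produce one.

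With $K=\operatorname{conv}\{k_j\}$ the plan cannot work as proposed. The support points of $K$ are keys, while the KL between Gibbs rows is governed by the Gibbs means $\mu_y$. Your claim that $\mu_x$ concentrates near the face of $K$ exposed by $x$ is false uniformly in $\pi$. Give the face keys weight $e^{-10\gamma}$ and $p_x$ lives at distance of order one from that face. Even for uniform $\pi$ the same happens once $\log N\gtrsim\gamma$.

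What you actually have are two bounds:
\begin{itemize}
\item the first-order bound $\mathrm{KL}\le2\gamma\|x-y\|_2$, which gives a uniform net of size $(\gamma/\varepsilon^2)^{m-1}$;
\item the Hoeffding bound $\mathrm{KL}\le\gamma^2\|x-y\|_2^2/2$, which gives a uniform net of size $(\gamma/\varepsilon)^{m-1}$.
\end{itemize}
The $(m-1)/2$ exponent requires a curvature-adaptive cover, not a uniform net. The fallback does not help either. A factor $\log(\gamma/\varepsilon^2)$, or anything depending on $N$ or $\min_j\pi_j$, cannot be absorbed into $C_m$.

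The missing idea is to apply polytope approximation to a different body: the one whose support function is the smoothed log-partition itself, not the key hull. Take $F(\theta)=\log\sum_j\pi_je^{\langle\theta,k_j\rangle}$, and on $\mathbb S^{m-1}$ set
\begin{itemize}
\item $h(x)=2+F(\gamma x)/\gamma$;
\item $m_x=\nabla F(\gamma x)$, which is your $\mu_x$;
\item $a_x=h(x)-\langle x,m_x\rangle=2-\mathrm{KL}(p_x\|\pi)/\gamma\in[0,2]$.
\end{itemize}
Then $1\le h\le 3$. With $g$ the round metric and $k$ the random key,
\[
\nabla_S^2h+h\,g=\gamma\operatorname{Cov}_{p_x}(k)|_{T_x\mathbb S^{m-1}}+a_x\,g\succeq0 .
\]
So the $1$-homogeneous extension of $h$ is convex. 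It is therefore the support function of a body $\widetilde K$ with $B_2^m\subseteq\widetilde K\subseteq 3B_2^m$, uniformly in $N$ and $\pi$. The additive constant $2$ is what secures both the convexity and this sandwich.

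The support points of $\widetilde K$ are $c_y=m_y+a_y\,y$. Rearranging your Bregman identity gives
\[
h(x)-\langle x,c_y\rangle=\frac{\mathrm{KL}(p_y\|p_x)}{\gamma}+a_y\bigl(1-\langle x,y\rangle\bigr),
\]
with both terms nonnegative. Now take an inscribed polytope with vertices $c_{y_1},\dots,c_{y_M}$ approximating $\widetilde K$ to support gap $t=\min\{1,\varepsilon^2/(2\gamma)\}$, with $M\le C_m t^{-(m-1)/2}$. For each $x$, the vertex attaining $h_P(x)$ gives $\mathrm{KL}(p_{y_s}\|p_x)\le\varepsilon^2/2$, and Pinsker finishes.

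The $\pi$-tilt and entropy effects you were fighting are absorbed into $a_y$, which only enlarges the gap. No concentration of $\mu_x$ near any face is ever needed.
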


\paragraph{Log partition and support body.}
The case $\gamma=0$ is immediate: every row equals $\pi$.  Assume
$\gamma>0$ and define
\[
F(\theta)=\log\sum_{j=1}^N\pi_j e^{\langle\theta,k_j\rangle},
\qquad m_x=\nabla F(\gamma x),
\qquad h(x)=2+\frac{F(\gamma x)}{\gamma}.
\]
Since $\|k_j\|_2\le1$, $-\gamma\le F(\gamma x)\le\gamma$ and hence
$1\le h(x)\le3$.  Moreover
\[
\operatorname{KL}(p_x\|\pi)
 =\gamma\langle x,m_x\rangle-F(\gamma x).
\]
The ratio $p_x(j)/\pi_j=e^{\gamma\langle x,k_j\rangle-F(\gamma x)}$ is at
most $e^{2\gamma}$, so
$0\le\operatorname{KL}(p_x\|\pi)\le2\gamma$ without any lower bound on
$\min_j\pi_j$.  Put
\[
a_x=h(x)-\langle x,m_x\rangle
 =2-\frac{\operatorname{KL}(p_x\|\pi)}{\gamma}\in[0,2].
\tag{A.1}
\]
On the tangent space $T_x\mathbb S^{m-1}$,
\[
\nabla_{\!S}^2h(x)+h(x)g_x
 =\gamma\operatorname{Cov}_{p_x}(K)|_{T_x\mathbb S^{m-1}}+a_xg_x\succeq0.
\tag{A.2}
\]
Consider the one-homogeneous extension $H(0)=0$ and
$H(z)=\|z\|_2h(z/\|z\|_2)$ for $z\ne0$.  Its Euclidean Hessian away from
the origin has zero radial component and tangent block
$\|z\|_2^{-1}(\nabla_S^2h+hg)\succeq0$.  Thus $H$ is convex on every
segment not passing through the origin.  A segment through the origin lies
on one line; convexity across the origin follows from
$H(x)+H(-x)=h(x)+h(-x)>0$.  Hence $H$ is globally convex and is the support
function of a compact convex body $K\subset\mathbb R^m$.  Since
$1\le h\le3$,
\[
B_2^m\subseteq K\subseteq3B_2^m.
\tag{A.3}
\]
The support point with outer normal $x$ is
\[
c_x=\nabla H(x)=m_x+a_xx.
\tag{A.4}
\]
This is also the standard spherical support-function criterion; see
\citet{schneider}.

\paragraph{KL/Bregman identity.}
For the log partition, the Bregman identity is
\[
D_F(\gamma x,\gamma y)
 =F(\gamma x)-F(\gamma y)-\gamma\langle m_y,x-y\rangle
 =\operatorname{KL}(p_y\|p_x).
\tag{A.5}
\]
Using (A.1)--(A.4), a direct rearrangement gives the support-gap identity
\[
h(x)-\langle x,c_y\rangle
 =\frac{\operatorname{KL}(p_y\|p_x)}{\gamma}
   +a_y(1-\langle x,y\rangle).
\tag{A.6}
\]
The second term is nonnegative.  Therefore a support point $c_y$ whose
support gap at $x$ is at most $t$ satisfies
$\operatorname{KL}(p_y\|p_x)\le\gamma t$.  The KL/Bregman geometry itself
is classical for regular exponential families; see, e.g.,
\citet{banerjee}.

\paragraph{Projection net and polytope approximation.}
Classical convex-body approximation gives the following fixed-dimensional
fact: if $B_2^m\subseteq K\subseteq3B_2^m$ and $0<t\le1$, there are support
points $c_{y_1},\ldots,c_{y_M}$ such that the inscribed polytope
$P=\operatorname{conv}\{c_{y_s}\}_{s=1}^M$ obeys
\[
0\le h_K(x)-h_P(x)\le t\quad\forall x\in\mathbb S^{m-1},
\qquad M\le C_mt^{-(m-1)/2}.
\tag{A.7}
\]
The exponent in (A.7) is the Dudley--Bronshteyn--Ivanov rate.  Modern
quantitative formulations are reviewed by \citet{aryamount}.  Taking
$t=\min\{1,\varepsilon^2/(2\gamma)\}$ and, for each $x$, a vertex attaining
$h_P(x)$, (A.6) yields
$\operatorname{KL}(p_{y_s}\|p_x)\le\varepsilon^2/2$.
Pinsker's inequality then gives $\|p_{y_s}-p_x\|_1\le\varepsilon$.
Combining the small-$\gamma$ and $t<1$ cases gives the stated
$C_m(1+\gamma/\varepsilon^2)^{(m-1)/2}$ cardinality.  The construction uses
only (A.3), so the constant is independent of $N$ and $\pi$.

\paragraph{Representative rows imply matrix rank.}
For any finite collection of query points $x_i$, assign each row $p_{x_i}$
to a representative $p_{y_{s(i)}}$ and let $B_{i\cdot}=p_{y_{s(i)}}$.
Then $\max_i\|p_{x_i}-B_{i\cdot}\|_1\le\varepsilon$, $B$ is row-stochastic
and nonnegative, and it has at most $M$ distinct rows.  Consequently
$\operatorname{rank}(B)\le M$.  This is the representative-row-to-rank step
used in the main theorem.  For a rectangular $m_Q\times N$ attention matrix,
the same proof gives the trivial cap $\min\{m_Q,N\}$ in place of $n$; none
of the upper or quotient arguments requires equal query and key sets.

\paragraph{Edge cases.}
For $m=1$, $\mathbb S^0=\{-1,+1\}$, so two rows cover the family (one row if
$\gamma=0$).  Repeated keys simply repeat coordinates and do not change any
step.  A lower-dimensional key span may be projected to that span and only
improves the dimension.  The theorem assumes $\pi_j>0$; a zero-weight
coordinate, if present in an application, is identically absent and can be
deleted before applying the theorem.

\begin{appendixproposition}[Weighted full-ball lift]
For weighted Gibbs rows with $x_i,y_j\in B_2^r$ and inverse temperature
$\gamma$, the same argument yields
\[
r_\varepsilon\le
\min\left\{n,C_r\left(1+\frac{\gamma}{\varepsilon^2}\right)^{r/2}\right\}.
\tag{A.8}
\]
\end{appendixproposition}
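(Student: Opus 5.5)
The plan is to reduce the full-ball weighted family to the spherical theorem one dimension up by hemisphere lifting. Given queries $x_i\in B_2^r$ and keys $y_j\in B_2^r$, we lift each query to the unit sphere $\mathbb S^{r}\subset\mathbb R^{r+1}$ via
\[
\hat x_i=\bigl(x_i,\sqrt{1-\|x_i\|_2^2}\bigr),
\]
and each key to $\hat y_j=(y_j,0)\in B_2^{r+1}$. Then $\langle\hat x_i,\hat y_j\rangle=\langle x_i,y_j\rangle$, so every weighted Gibbs row
\[
p_{x_i}(j)\propto\pi_j e^{\gamma\langle x_i,y_j\rangle}
\]
coincides with the row $p_{\hat x_i}$ of the spherical family with $m=r+1$, the same positive weights $\pi$, and the same $\gamma$. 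The hypotheses of the weighted spherical Gibbs cover theorem hold verbatim: the keys lie in $B_2^{r+1}$, the weights are positive and normalized, and $\gamma\ge0$.

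Applying that theorem with $m=r+1$ gives an $\ell_1$ $\varepsilon$-cover of $\{p_z:z\in\mathbb S^{r}\}$. It consists of members of the same family and has cardinality at most
\[
C_{r+1}\Bigl(1+\frac{\gamma}{\varepsilon^2}\Bigr)^{r/2}.
\]
Since the lifted queries lie in this sphere, each row $p_{x_i}$ is within $\varepsilon$ of some representative. The representative-row step then builds $B$ with at most $M$ distinct rows, so $\operatorname{rank}(B)\le M$. Trivially $\operatorname{rank}(A)\le n$, and $A$ itself is an exact approximant, which gives the cap $\min\{n,\cdot\}$. Renaming $C_{r+1}$ as $C_r$ yields (A.8), and the constant stays uniform in $N$ and $\pi$ because the spherical constant is.

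A few edge cases need separate handling. When $r=0$ the ball is a point, all rows equal $\pi$, and the rank is one. When $r=1$ the lift lands in $\mathbb S^1$, which is exactly $m=2$ and covered by the theorem. When $\gamma=0$ every row equals $\pi$.

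No step here is a serious obstacle. The one point to check is that the spherical theorem permits keys strictly inside the ball, here on the equator slice $B_2^r\times\{0\}$, rather than on the sphere. It does, since it assumes only $k_j\in B_2^m$ and uses only (A.3). So the exponent $(m-1)/2=r/2$ comes out directly.
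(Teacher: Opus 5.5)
Your proposal is correct and follows the paper's proof: you lift queries to $\widehat x=(x,\sqrt{1-\|x\|_2^2})\in\mathbb S^r$ and keys to $\widehat y=(y,0)\in B_2^{r+1}$, which preserves inner products exactly, and then the weighted spherical cover with $m=r+1$ plus the representative-row step give the $r/2$ exponent uniformly in $N$ and $\pi$. Your explicit checks that $m=r+1\ge2$ and that the spherical theorem only needs keys in $B_2^m$ are points the paper leaves implicit, while the paper additionally notes the zero-radius case, where rank one is exact.
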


\begin{proof}
Lift $x\in B_2^r$ to
$\widehat x=(x,\sqrt{1-\|x\|_2^2})\in\mathbb S^r$ and
$y\in B_2^r$ to $\widehat y=(y,0)\in B_2^{r+1}$.  Then
$\langle\widehat x,\widehat y\rangle=\langle x,y\rangle$ exactly.  Apply the
weighted spherical theorem in dimension $r+1$ and use the representative-row
construction.  If either query or key radius is zero, the family has one row
after row/key shifts and rank one is exact.
\end{proof}

\section{Exact quotient and robust residual theorem}
\label{app:quotient}

Let
\[
u_i=q_i-q_1,\qquad U=\operatorname{span}\{u_i\},\qquad
s_j=P_U(k_j-k_1),\qquad V=\operatorname{span}\{s_j\}\subseteq U.
\tag{B.1}
\]
The original attention rows can be written
\[
A_{ij}=\frac{\pi_j e^{\beta\langle u_i,s_j\rangle}}
 {\sum_\ell\pi_\ell e^{\beta\langle u_i,s_\ell\rangle}},
\qquad \pi_j\propto e^{\beta\langle q_1,k_j\rangle}>0.
\tag{B.2}
\]
Indeed, $u_i\in U$, so the part of $k_j-k_1$ orthogonal to $U$ is invisible,
and the remaining $k_1$ term is rowwise scalar.

For any $W\subseteq U$ with projector $P=P_W$, define the projected radii
\[
\begin{gathered}
\rho_{Q,W}=\inf_{a_W\in W}\max_i\|Pu_i-a_W\|_2,
\qquad
\rho_{K,W}=\inf_{c_W\in W}\max_j\|Ps_j-c_W\|_2,\\
\Gamma_W=\beta\rho_{Q,W}\rho_{K,W}.
\end{gathered}
\tag{B.3}
\]
Put $\xi_i=(I-P)u_i$, $\zeta_j=(I-P)s_j$, and define
\[
\eta_W=\frac\beta2\inf_{a_\perp\in U\cap W^\perp}
 \max_i\operatorname{osc}_j\langle\xi_i-a_\perp,\zeta_j\rangle,
\qquad
\tau_W=2\tanh(\eta_W/2).
\tag{B.4}
\]
Only residual key differences enter the oscillation, so the finite-dimensional
infimum is attained after quotienting directions orthogonal to their span.

\begin{appendixlemma}[Sharp quotient softmax perturbation]
For $x,h\in\mathbb R^N$,
\[
\|\operatorname{softmax}(x+h)-\operatorname{softmax}(x)\|_1
\le2\tanh\!\left(\frac{\operatorname{osc}(h)}4\right)
\le\frac{\operatorname{osc}(h)}2.
\tag{B.5}
\]
The first inequality is sharp.
\end{appendixlemma}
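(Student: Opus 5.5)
Both inequalities in (B.5) reduce to a statement about how far the Gibbs distribution can move along the segment $x+th$, $t\in[0,1]$. Since softmax is invariant under adding a constant to its argument, I first shift $h$ so that $\min_j h_j=-D/2$ and $\max_j h_j=D/2$, where $D=\operatorname{osc}(h)$. Write $p=\operatorname{softmax}(x)$ and $p'=\operatorname{softmax}(x+h)$. Then $p'_j=p_je^{h_j}/Z$ with $Z=\sum_\ell p_\ell e^{h_\ell}$. The $\ell_1$ distance is $2\sum_j(p'_j-p_j)_+$, which is twice the $p'$-mass minus the $p$-mass of the set $S=\{j:e^{h_j}>Z\}$.

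The first step is to reduce to a two-point problem. Set $\alpha=p(S)$. Because $e^{h_j}\le e^{D/2}$ on $S$ and $e^{h_j}\ge e^{-D/2}$ off $S$, I expect the extremal configuration to place the mass of $p$ only at the two extreme values of $h$. Concretely, I will bound
\[
p'(S)-p(S)\le \frac{\alpha e^{D/2}}{\alpha e^{D/2}+(1-\alpha)e^{-D/2}}-\alpha .
\]
This holds because $p'(S)=\frac{\sum_S p_je^{h_j}}{\sum_S p_je^{h_j}+\sum_{S^c}p_je^{h_j}}$ is increasing in the $S$-sum and decreasing in the $S^c$-sum. Replacing the $h_j$ by their extreme values on each side therefore only increases it.

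The second step maximizes $g(\alpha)=\frac{\alpha e^{D}}{\alpha e^{D}+1-\alpha}-\alpha$ over $\alpha\in[0,1]$. A calculus computation gives the optimum at $\alpha=1/(1+e^{D/2})$, with value $\frac{e^{D/2}-1}{e^{D/2}+1}=\tanh(D/4)$. Multiplying by $2$ gives $\|p'-p\|_1\le 2\tanh(\operatorname{osc}(h)/4)$.

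Sharpness follows from the same computation. Take $N=2$, $h=(D/2,-D/2)$, and $x$ chosen so that $p=(\alpha^*,1-\alpha^*)$; this configuration attains equality. If $N>2$ is required, the remaining coordinates can carry arbitrarily small mass, giving equality in the limit, or they can be given mass exactly zero if one allows degenerate $x$. The second inequality in (B.5) is the elementary bound $\tanh u\le u$ for $u\ge0$.

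The main obstacle is the reduction in the first step. One must check that the monotonicity argument is valid while the set $S$ is held fixed. This is fine because the quantity $\sum_j(p'_j-p_j)_+$ is, for every subset $T$, at least $p'(T)-p(T)$, with equality at $T=S$. Hence it suffices to bound $\sup_T\bigl(p'(T)-p(T)\bigr)$, and for a fixed $T$ the two-point replacement applies directly. An alternative route, which I would keep as a fallback, is to integrate $\frac{d}{dt}p^{(t)}$ along the path. That gives $\|dp/dt\|_1\le 2\sqrt{\operatorname{Var}}$-type bounds and a differential inequality, but recovering the exact $\tanh$ constant that way is more delicate. The direct extremal argument is cleaner.
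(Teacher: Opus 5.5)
Your argument is correct, but it takes a different route from the paper. The paper's proof is essentially a citation. It notes that $p'_j/p_j\propto e^{h_j}$, so the Hilbert projective distance between $\operatorname{softmax}(x)$ and $\operatorname{softmax}(x+h)$ is exactly $\operatorname{osc}(h)$. It then invokes the sharp total-variation versus Hilbert-metric inequality $\operatorname{TV}\le\tanh(d_H/4)$ of Cohen and Fausti, and inherits sharpness from their two-point extremizers.

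You instead prove that inequality from scratch in the finite setting. You write $\tfrac12\|p'-p\|_1=\sup_T\bigl(p'(T)-p(T)\bigr)$ and push $h$ to its extreme values $\pm D/2$ on $T$ and $T^c$ using monotonicity of $a/(a+b)$. You then maximize the one-variable function $g(\alpha)$, whose maximum at $\alpha=1/(1+e^{D/2})$ is indeed $\tanh(D/4)$. Any two strictly positive probability vectors can be written as $p$ and $p\,e^{h}/Z$ with $\operatorname{osc}(h)$ equal to their Hilbert distance. So your computation in fact recovers the cited inequality in full on the finite simplex.

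What each route buys:
\begin{itemize}
\item The paper's route is shorter and places the lemma inside Hilbert-metric geometry, which extends beyond finite alphabets.
\item Your route is self-contained and elementary, and it makes the extremal configuration explicit, so sharpness is proved rather than cited.
\end{itemize}

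One small improvement: for $N>2$ you do not need a limiting argument or degenerate $x$. Exact equality holds when $h$ takes only the two values $\pm D/2$, split into two groups with $p$-masses $1/(1+e^{D/2})$ and $e^{D/2}/(1+e^{D/2})$. Then $Z=1$, the positive group is exactly $S$, and the gap equals $\tanh(D/4)$. This is the ``two-coordinate likelihood ratio'' configuration the paper mentions.
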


\begin{proof}
The likelihood ratio between the two probability vectors is proportional to
$e^{h_j}$, so their Hilbert projective distance is exactly
$\max_jh_j-\min_jh_j=\operatorname{osc}(h)$.  The sharp total-variation
versus Hilbert-distance inequality of \citet{cohenfausti} gives
$\operatorname{TV}\le\tanh(\operatorname{osc}(h)/4)$; multiply by two.  The
second inequality is $2\tanh(t/4)\le t/2$.  Two-coordinate likelihood ratios
attain the sharp envelope.
\end{proof}

\begin{appendixtheorem}[Robust interaction geometry]
Let $W\subseteq U$ have dimension $r$ and define (B.3)--(B.4).  If
$\tau_W<\varepsilon$, then
\[
r_\varepsilon(A)\le
\min\left\{n,C_r\left(1+
 \frac{\Gamma_W}{(\varepsilon-\tau_W)^2}\right)^{r/2}\right\}.
\tag{B.6}
\]
If $r=0$ or $\Gamma_W=0$, rank one is valid whenever
$\tau_W\le\varepsilon$.
\end{appendixtheorem}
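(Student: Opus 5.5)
The plan is a triangle-inequality argument: replace $A$ by a projected attention matrix $\widetilde A$ that differs from it by at most $\tau_W$ in every row, and then cover the rows of $\widetilde A$ to accuracy $\varepsilon-\tau_W$ with the weighted full-ball lift (A.8).

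\emph{Step 1: reduce to the representation (B.2).} Write the logits as $\beta\langle u_i,s_j\rangle$ plus the positive base weights $\pi_j$. For $W\subseteq U$, split
\[
\langle u_i,s_j\rangle=\langle Pu_i,Ps_j\rangle+\langle\xi_i,\zeta_j\rangle,
\]
which holds because the cross terms vanish: $Pu_i\in W$ and $\zeta_j\in W^\perp$.

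\emph{Step 2: remove the residual with a row-scalar shift.} Fix $a_\perp\in U\cap W^\perp$ attaining (or nearly attaining) the infimum in (B.4). Then
\[
\langle\xi_i,\zeta_j\rangle=\langle\xi_i-a_\perp,\zeta_j\rangle+\langle a_\perp,\zeta_j\rangle .
\]
The last term depends only on $j$, so I absorb it into new base weights $\pi'_j\propto\pi_j e^{\beta\langle a_\perp,\zeta_j\rangle}>0$. This does not change $A$. Now define $\widetilde A_{ij}\propto\pi'_je^{\beta\langle Pu_i,Ps_j\rangle}$. Row $i$ of $A$ is $\operatorname{softmax}(x+h)$ with $h_j=\beta\langle\xi_i-a_\perp,\zeta_j\rangle$, and row $i$ of $\widetilde A$ is $\operatorname{softmax}(x)$. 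Lemma (B.5) then gives
\[
\|A_{i\cdot}-\widetilde A_{i\cdot}\|_1\le 2\tanh(\operatorname{osc}(h)/4)\le 2\tanh(\eta_W/2)=\tau_W .
\]
The bound is uniform in $i$ because $\eta_W$ takes the max over $i$. If the infimum is not attained, I take a minimizing sequence: $\tanh$ is continuous and $\tau_W<\varepsilon$ leaves slack. The text after (B.4) also notes that the infimum is attained.

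\emph{Step 3: cover $\widetilde A$.} Recenter again inside $W$. Choose $a_W,c_W$ attaining the radii in (B.3) and write
\[
\langle Pu_i,Ps_j\rangle=\langle Pu_i-a_W,Ps_j-c_W\rangle+\langle a_W,Ps_j-c_W\rangle+(\text{terms depending only on } i),
\]
where the $i$-only terms cancel in the row normalization. The middle term is key-only, so I fold it into the base weights once more; they stay positive. Scale $x_i=(Pu_i-a_W)/\rho_{Q,W}$ and $y_j=(Ps_j-c_W)/\rho_{K,W}$. Both lie in the unit ball of $W\cong\mathbb R^r$, and the effective inverse temperature is $\gamma=\beta\rho_{Q,W}\rho_{K,W}=\Gamma_W$.

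Proposition (A.8) applied with accuracy $\varepsilon-\tau_W$ gives a matrix $B$ with at most $C_r(1+\Gamma_W/(\varepsilon-\tau_W)^2)^{r/2}$ distinct rows and $\max_i\|\widetilde A_{i\cdot}-B_{i\cdot}\|_1\le\varepsilon-\tau_W$. The triangle inequality yields $\max_i\|A_{i\cdot}-B_{i\cdot}\|_1\le\varepsilon$. The cap $n$ comes from taking $B=A$.

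\emph{Degenerate cases.} If $r=0$ or $\Gamma_W=0$, then $\widetilde A$ has identical rows, namely the normalized weights, so it has rank one. Step 2 alone then gives error at most $\tau_W\le\varepsilon$. Non-strict inequality is allowed here because no covering slack is needed.

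\emph{Main obstacle.} The delicate step is Step 2. The key-only terms have to be tracked so that they become positive base weights rather than query-dependent errors. This is what makes $\eta_W$ measure only the genuinely interacting residual, with the best recentering $a_\perp$ inside $U\cap W^\perp$. Everything else is bookkeeping plus the uniformity of (A.8) in the base weights, which is what allows these repeated reweightings.
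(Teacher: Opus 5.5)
Your proposal is correct and follows the paper's proof: the paper does the same recentering in a single identity (B.7) with centres $a_W,c_W\in W$ and $a_\perp\in U\cap W^\perp$. It absorbs the key-only term $\langle a_W+a_\perp,s_j\rangle$ into positive base weights, bounds the residual logit perturbation by $\tau_W$ via the projective softmax lemma, and then applies the full-ball lift at accuracy $\varepsilon-\tau_W$ plus the triangle inequality. Your two-stage recentering produces the same projected matrix $\widetilde A$, and your degenerate-case argument matches the paper's.
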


\begin{proof}
Choose projected centres $a_W,c_W\in W$ and a residual query centre
$a_\perp\in U\cap W^\perp$.  Orthogonality gives the exact decomposition
\begin{align}
\langle u_i,s_j\rangle
={}&\langle Pu_i-a_W,Ps_j-c_W\rangle
 +\langle\xi_i-a_\perp,\zeta_j\rangle \notag\\
&+\langle a_W+a_\perp,s_j\rangle
 +\langle Pu_i-a_W,c_W\rangle.
\tag{B.7}
\end{align}
Orthogonality eliminates mixed $W/W^\perp$ terms.  The third term depends
only on the key and is absorbed into a common positive base weight; the
fourth is rowwise scalar and vanishes under softmax.  Thus the exact row is a
projected weighted Gibbs row perturbed by residual logit vector
$h_{ij}=\beta\langle\xi_i-a_\perp,\zeta_j\rangle$.  By the preceding lemma,
optimizing over the common residual centre gives
\[
\max_i\|A_{i\cdot}-\widetilde A_{i\cdot}\|_1\le\tau_W.
\tag{B.8}
\]
The projected family has query radius $\rho_{Q,W}$, key radius
$\rho_{K,W}$, and normalized inverse-temperature product $\Gamma_W$.
The full-ball lift, at remaining error $\varepsilon-\tau_W>0$, gives a rank
$C_r(1+\Gamma_W/(\varepsilon-\tau_W)^2)^{r/2}$ approximant to
$\widetilde A$.  The triangle inequality yields (B.6), with the trivial cap
$n$.  If $r=0$ or one projected radius is zero, the projected Gibbs family
has one row; (B.8) then gives the rank-one statement.
\end{proof}

\paragraph{Exact theorem and edge regimes.}
Taking $W=V$ makes every residual key $\zeta_j$ zero, hence $\tau_V=0$ and
the robust theorem becomes the exact interaction-quotient theorem with
\[
\rho_Q=\inf_{a\in V}\max_i\|P_Vu_i-a\|_2,
\qquad
\rho_K=\inf_{c\in V}\max_j\|s_j-c\|_2,
\qquad
\Gamma=\beta\rho_Q\rho_K.
\tag{B.9}
\]
If $\beta=0$, all rows are uniform and rank one is exact.  If the residual is
zero, $\tau_W=0$.  As $\beta\downarrow0$, both $\Gamma_W$ and $\eta_W$
vanish linearly.  If $\tau_W\uparrow\varepsilon$, no positive cover-error
budget remains and only the trivial cap $n$ is guaranteed by this proof.

A simpler residual bound is obtained from
\[
\delta_{Q,W}=\inf_{a_\perp\in U\cap W^\perp}
 \max_i\|\xi_i-a_\perp\|_2,
\qquad
\omega_{K,W}=\frac12\max_{j,\ell}\|\zeta_j-\zeta_\ell\|_2.
\tag{B.10}
\]
Cauchy--Schwarz gives $\eta_W\le\beta\delta_{Q,W}\omega_{K,W}$ and therefore
the computable upper bound
\[
\tau_W^{\mathrm{width}}
 =2\tanh\!\left(\frac{\beta\delta_{Q,W}\omega_{K,W}}2\right).
\tag{B.11}
\]

\section{SVD bridge and tolerance-indexed effective interaction dimension}
\label{app:svd}

Mean-centre the visible keys:
\[
\bar s=\frac1n\sum_js_j,
\qquad Z_{j\cdot}=(s_j-\bar s)^\top.
\tag{C.1}
\]
Because $s_1=0$, the affine span of the $s_j$ equals their linear span, so
$\operatorname{rank}(Z)=\dim V$.  Let $W_r$ be the top-$r$ right-singular
subspace of $Z$, with projector $P_r$, and $E_r=Z(I-P_r)$.

\begin{appendixproposition}[Deterministic spectral residual]
For every $r$,
\[
\max_j\|E_{r,j\cdot}\|_2\le\sigma_{r+1}(Z),
\qquad
\frac12\operatorname{diam}\{E_{r,j\cdot}\}
 \le\frac{\sigma_{r+1}(Z)}{\sqrt2}.
\tag{C.2}
\]
\end{appendixproposition}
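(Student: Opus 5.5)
The proof is a direct operator-norm argument applied to the SVD of $Z$, with no need for earlier results of the paper. Write the singular value decomposition $Z=\sum_{k}\sigma_k(Z)\,w_kv_k^\top$ with orthonormal left vectors $w_k\in\mathbb R^n$, orthonormal right vectors $v_k$, and $\sigma_1\ge\sigma_2\ge\cdots\ge0$. Set $\sigma_k(Z)=0$ for $k>\operatorname{rank}(Z)$, so the statement is trivially true with $E_r=0$ once $r\ge\dim V$. Since $W_r=\operatorname{span}\{v_1,\dots,v_r\}$, multiplying on the right by $I-P_r$ removes exactly the first $r$ terms. This gives
\[
E_r=Z(I-P_r)=\sum_{k>r}\sigma_k(Z)\,w_kv_k^\top ,
\]
which is again an SVD. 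Hence $\|E_r\|_{\mathrm{op}}=\sigma_{r+1}(Z)$, the Eckart--Young residual.

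Both inequalities then follow by testing $E_r$ on suitable unit or near-unit vectors on the left. For the row bound, the $j$-th row is $E_{r,j\cdot}=e_j^\top E_r$. Therefore
\[
\|E_{r,j\cdot}\|_2\le\|e_j\|_2\,\|E_r\|_{\mathrm{op}}=\sigma_{r+1}(Z).
\]
For the diameter bound, the difference of two rows is $E_{r,j\cdot}-E_{r,\ell\cdot}=(e_j-e_\ell)^\top E_r$. Since $\|e_j-e_\ell\|_2=\sqrt2$ for $j\ne\ell$,
\[
\|E_{r,j\cdot}-E_{r,\ell\cdot}\|_2\le\sqrt2\,\sigma_{r+1}(Z).
\]
Taking the maximum over pairs gives $\operatorname{diam}\{E_{r,j\cdot}\}\le\sqrt2\,\sigma_{r+1}(Z)$, and halving yields $\sigma_{r+1}(Z)/\sqrt2$.

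There is no real obstacle here. The only point needing care is the identification $\|Z(I-P_r)\|_{\mathrm{op}}=\sigma_{r+1}(Z)$. This uses that $P_r$ projects onto the span of the top $r$ right-singular vectors, so the complementary projector keeps precisely the tail of the SVD. If singular values repeat, any choice of top-$r$ subspace still leaves a tail whose largest singular value is $\sigma_{r+1}$, so the bound is unaffected. The mean-centring of $Z$ plays no role in (C.2) itself. It matters only downstream, where the rows of $E_r$ are the residual keys $\zeta_j$ up to a common translate, which cancels in the oscillation defining $\eta_W$.
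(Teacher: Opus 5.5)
Your proof is correct and matches the paper's argument: both use $\|E_r\|_{2\to2}=\sigma_{r+1}(Z)$ and then test $E_r$ against $e_j$ and against $e_j-e_\ell$ (which has norm $\sqrt2$) on the left. Your extra remarks on repeated singular values and on the role of mean-centring are accurate but are not needed for (C.2) itself.
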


\begin{proof}
$\|E_r\|_{2\to2}=\sigma_{r+1}(Z)$, so each row norm is at most this value.
For rows $j,\ell$,
\[
\|E_{r,j\cdot}-E_{r,\ell\cdot}\|_2
 =\|(e_j-e_\ell)^\top E_r\|_2
 \le\sqrt2\,\|E_r\|_{2\to2}.
\]
Divide by two.
\end{proof}

Let the full query-difference Chebyshev radius be
\[
\rho_{Q,U}=\inf_{a\in U}\max_i\|u_i-a\|_2.
\tag{C.3}
\]
Orthogonal projection of a full-space centre gives
$\delta_{Q,r}\le\rho_{Q,U}$, while the proposition gives
$\omega_{K,r}\le\sigma_{r+1}(Z)/\sqrt2$.  Hence the spectrum-only
perturbation bound is
\[
\tau_r^{\mathrm{spec}}
 =2\tanh\!\left(\frac{\beta\rho_{Q,U}\sigma_{r+1}(Z)}{2\sqrt2}\right).
\tag{C.4}
\]
The tighter directly computed residual-width bound is
\[
\delta_{Q,r}=\inf_{a_\perp\in U\cap W_r^\perp}
 \max_i\|(I-P_r)u_i-a_\perp\|_2,
\qquad
\omega_{K,r}=\frac12\max_{j,\ell}
 \|(I-P_r)(s_j-s_\ell)\|_2,
\tag{C.5}
\]
\[
\tau_r^{\mathrm{width}}
 =2\tanh\!\left(\frac{\beta\delta_{Q,r}\omega_{K,r}}2\right)
 \le\tau_r^{\mathrm{spec}}.
\tag{C.6}
\]
The projected radii and shape parameter are
\[
\rho_{Q,r}=\inf_{a\in W_r}\max_i\|P_ru_i-a\|_2,
\qquad
\rho_{K,r}=\inf_{c\in W_r}\max_j\|P_rs_j-c\|_2,
\qquad
\Gamma_r=\beta\rho_{Q,r}\rho_{K,r}.
\tag{C.7}
\]
Therefore, whenever $\tau_r^{\mathrm{width}}<\varepsilon$,
\[
r_\varepsilon(A)\le
\min\left\{n,C_r\left(1+
 \frac{\Gamma_r}{(\varepsilon-\tau_r^{\mathrm{width}})^2}
 \right)^{r/2}\right\}.
\tag{C.8}
\]
The reproducible SVD effective dimension is
\[
r_{\mathrm{int}}^{\mathrm{SVD}}(\tau)
 =\min\{r:\tau_r^{\mathrm{width}}\le\tau\}.
\tag{C.9}
\]
The subspace oracle optimizes over all candidate subspaces:
\[
r_{\mathrm{int}}^\star(\tau)
 =\min\{\dim W:W\subseteq U,\ \tau_W\le\tau\}.
\tag{C.10}
\]
At $\tau=0$, (C.10) recovers $\dim V$ in exact arithmetic (unless all query
variation is trivial).  Top-$r$ SVD supplies a canonical nested family and is
optimal for the usual key-matrix spectral/Frobenius objectives; it need not
minimize the product $\delta_{Q,W}\omega_{K,W}$ or the complete right-hand
side of (C.8).  Percentage explained variance is not the criterion in
(C.9)--(C.10).

\subsection{Dimension dependence of the covering constant}

The exponent in (A.7) is sharp for fixed dimension.  General-purpose convex
coverings with the optimal approximation exponent can carry prefactors of the
form $2^{O(r)}$ in well-centred settings \citep{aryacover}.  Propagating such
a construction through (A.3)--(A.8) gives an explicit but very large
dimension-dependent prefactor.  At learned-head dimensions $r=50$--$64$,
this prefactor is not numerically useful.  We therefore retain symbolic $C_r$
and separate the prefactor-free shape term
\[
\log S_r=\frac r2\log\!\left(1+
 \frac{\Gamma_r}{(\varepsilon-\tau_r)^2}\right)
\tag{C.11}
\]
from the finite constructions.  A directly evaluated attention-SVD or
representative-row approximant gives a finite constructive rank upper
certificate; (C.11) records the prefactor-free shape term.

\section{Sharp spherical law}
\label{app:sphere}

For $X=(x_1,\ldots,x_n)\in(\mathbb S^{d-1})^n$ let
$A_\beta(X)_{ij}\propto e^{\beta\langle x_i,x_j\rangle}$ and define
\[
\mathfrak R_\varepsilon^{\mathrm{sph}}(n,d,\beta)
 =\sup_{X\in(\mathbb S^{d-1})^n}r_\varepsilon(A_\beta(X)).
\]

\begin{appendixtheorem}[Sharp spherical law]
For fixed $d\ge2$ and $0<\varepsilon<1$,
\[
\mathfrak R_\varepsilon^{\mathrm{sph}}(n,d,\beta)
 \asymp_{d,\varepsilon}
 \min\{n,(1+\beta)^{(d-1)/2}\}
\tag{D.1}
\]
uniformly for $n\ge1$ and $\beta\ge0$.
\end{appendixtheorem}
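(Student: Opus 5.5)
The proof splits into an upper bound and a matching lower bound.

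\textbf{Upper bound.} This is a direct application of the Weighted spherical Gibbs cover with $m=d$, $\gamma=\beta$, uniform weights $\pi_j=1/n$, and keys $k_j=x_j\in\mathbb S^{d-1}\subset B_2^d$. That theorem supplies at most $C_d(1+\beta/\varepsilon^2)^{(d-1)/2}\le C_d\varepsilon^{-(d-1)}(1+\beta)^{(d-1)/2}$ representative rows. By the representative-row-to-rank step, $r_\varepsilon(A_\beta(X))\le\min\{n,C_{d,\varepsilon}(1+\beta)^{(d-1)/2}\}$ uniformly in $X$. The cap $n$ is trivial. The case $d=2$ is covered because $m\ge2$ is allowed.

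\textbf{Lower bound.} I take $X$ to be a maximal $\delta$-separated set on $\mathbb S^{d-1}$ with $\delta\asymp c_\varepsilon\beta^{-1/2}$, so that $N\asymp_d\delta^{-(d-1)}\asymp\beta^{(d-1)/2}$, and I truncate to $\min\{n,N\}$ points. If $n>N$, I repeat points; this only duplicates columns and rows and leaves the rows as probability vectors. For large separation constant, each row $A_{i\cdot}$ puts mass roughly at least $1-\varepsilon'$ on a cap of radius $\asymp\beta^{-1/2}$ around $x_i$. Rows indexed by well-separated points then have nearly disjoint supports, and the matrix is close in max-row-$\ell_1$ to a block diagonal, near-identity matrix. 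The number of points that fall within distance $\rho$ of $x_i$ in a $\delta$-net is bounded by a constant depending on $d$ and $\rho/\delta$. Taking a sub-net at scale $L\beta^{-1/2}$ with $L=L(d,\varepsilon)$ large makes the mass each row puts outside its own Voronoi cell, under the Gaussian-like tail $e^{-\beta\|x_i-x_j\|^2/2}$, at most $\kappa$. Here $\kappa$ is small but depends on $\varepsilon$ through the gap to $1$.

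The resulting lower rank step is standard. Restrict to the $N'$ rows and columns of sub-net representatives, possibly grouping duplicated columns by summing. This gives an $N'\times N'$ matrix $D$ with diagonal entries $\ge1-\kappa$ and row off-diagonal $\ell_1$ mass $\le\kappa$. Any $B$ with $\max_i\|A_{i\cdot}-B_{i\cdot}\|_1\le\varepsilon$ induces, by summing columns within cells (which does not increase row $\ell_1$ error or rank), a $B'$ with $\max_i\|D_{i\cdot}-B'_{i\cdot}\|_1\le\varepsilon$. Then $B'$ has diagonal entries $\ge1-\kappa-\varepsilon$ and off-diagonal row sums $\le\kappa+\varepsilon$. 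To conclude that such a matrix has rank $\gtrsim N'$, I will use the Alon-type rank lower bound for nearly diagonal matrices only when $1-\kappa-\varepsilon>\kappa+\varepsilon$. That condition fails for $\varepsilon\ge1/2$, so I need a different tool there.

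\textbf{Main obstacle.} For general $\varepsilon<1$ I will instead use a volumetric or entropy argument on the approximant's row space. The rows $B'_{i\cdot}$ lie in a space of dimension $\operatorname{rank}B$, and in $\ell_\infty\to$ coordinate terms $B'_{ii}-B'_{ij}\ge$ a margin $\mu=1-\varepsilon-2\kappa>0$ for $j\ne i$ after shifting. Via a sign-rank or $\gamma_2$/trace-norm argument, the trace of $B'$ is $\ge N'(1-\kappa-\varepsilon)$ while each row has $\ell_1$ norm $\le1+\varepsilon$. Using $\operatorname{tr}(B')\le\|B'\|_{S_1}\le\sqrt{\operatorname{rank}}\,\|B'\|_F$ and $\|B'\|_F^2\le\sum_i\|B'_{i\cdot}\|_1\|B'_{i\cdot}\|_\infty$ gives a bound of order $N'$ provided row maxima are $O(1)$. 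Row maxima can blow up, so I would first truncate $B'$ to the positive, bounded part without increasing error; this truncation is the delicate step. Choosing $\kappa<(1-\varepsilon)/4$ then yields $\operatorname{rank}B\gtrsim_{\varepsilon}N'\asymp_{d,\varepsilon}\min\{n,(1+\beta)^{(d-1)/2}\}$. Small $\beta$ is handled by the rank-one lower bound and absorbed in constants.
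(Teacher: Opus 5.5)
Your upper bound is the paper's, and your lower-bound architecture is the same as the paper's: well-separated centres at scale $L\beta^{-1/2}$, row selection, and cell-summing decoding, where selection and decoding are both $\ell_\infty$-operator contractions. Using a deterministic separated set with shell counting, instead of the paper's random points and the average $I_d(\beta)\asymp\beta^{-(d-1)/2}$, is a legitimate variant.

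The gap is your final rank step. The obstacle at $\varepsilon\ge1/2$ is an artifact of charging the full $\varepsilon$ separately to the diagonal deficit and to the off-diagonal excess. The row error is one shared $\ell_1$ budget, so
$\|B'_{i\cdot}-e_i\|_1\le\|B'_{i\cdot}-D_{i\cdot}\|_1+\|D_{i\cdot}-e_i\|_1\le\varepsilon+2\kappa$.
That is, $\|B'-I_{N'}\|_{\infty\to\infty}\le\varepsilon+2\kappa<1$ as soon as $\kappa<(1-\varepsilon)/2$. Then $B'$ is invertible by the Neumann series; equivalently, $B'_{ii}-\sum_{j\ne i}|B'_{ij}|\ge1-\varepsilon-2\kappa>0$, which is strict diagonal dominance. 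Hence $\operatorname{rank}B\ge\operatorname{rank}B'=N'$ for every $\varepsilon<1$. This is exactly the paper's step (D.4); no Alon-type bound or other tool is needed.

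The substitute you propose does not work as written.
\begin{itemize}
\item Row maxima of $B'$ cannot blow up, because $\|B'_{i\cdot}\|_\infty\le\|B'_{i\cdot}\|_1\le1+\varepsilon$.
\item The entrywise truncation you call the delicate step would break the argument. Truncation does not control rank, so high rank of a truncated matrix says nothing about $\operatorname{rank}B$.
\item A sign-rank argument also fails, since the identity sign pattern has sign-rank at most $3$.
\end{itemize}
Without the truncation, your trace-norm chain does close:
$N'(1-\kappa-\varepsilon)\le\operatorname{tr}B'\le\sqrt{\operatorname{rank}B'}\,\|B'\|_F\le\sqrt{\operatorname{rank}B'\,N'}\,(1+\varepsilon)$.
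This gives $\operatorname{rank}B\ge N'\bigl((1-\kappa-\varepsilon)/(1+\varepsilon)\bigr)^2$, a weaker constant than the Neumann argument but sufficient.

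Two smaller points need fixing.
\begin{itemize}
\item Duplication for $n>N'$ is not harmless, because multiplicities enter the softmax normalisation. You must use balanced multiplicities, as the paper does with cluster sizes $\lfloor n/M\rfloor$ or $\lceil n/M\rceil$ (ratio at most $2$). Otherwise a neighbouring centre repeated far more often than $z_i$ can absorb most of row $i$'s mass despite its factor $e^{-L^2/2}$.
\item Take $X$ to be the $L\beta^{-1/2}$-separated set itself. Arbitrarily truncating a finer $c_\varepsilon\beta^{-1/2}$-net to $n$ points can destroy the Gaussian-like row profile on which your cell-mass bound relies.
\end{itemize}
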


\begin{proof}
The upper bound is the weighted spherical cover with uniform weights and
$m=d$, capped by $n$.  For the lower bound, write
\[
w_\beta(x,y)=e^{-\beta(1-\langle x,y\rangle)},
\qquad
I_d(\beta)=\int_{\mathbb S^{d-1}}w_\beta(x,y)\,d\sigma(y).
\]
Rotational invariance makes $I_d$ independent of $x$.  The change of
variables $t=1-\langle x,y\rangle$ has density comparable near zero to
$t^{(d-3)/2}$; elementary Laplace bounds therefore give, for $\beta\ge1$,
\[
I_d(\beta)\asymp_d\beta^{-(d-1)/2}.
\tag{D.2}
\]
This reciprocal-density estimate is the classical density ingredient in the
construction.

Take $N$ independent uniform sphere points.  The expected total directed
cross-energy is $N(N-1)I_d(\beta)$.  Hence some realization has total
cross-energy at most twice this expectation.  At least half its vertices then
have individual cross-energy at most $4(N-1)I_d(\beta)$; discard the rest.
With $N\asymp c_\varepsilon/I_d(\beta)$ and $c_\varepsilon$ sufficiently
small, we obtain
$M\asymp_{d,\varepsilon}\beta^{(d-1)/2}$ centres
$z_1,\ldots,z_M$ satisfying
\[
\max_i\sum_{j\ne i}w_\beta(z_i,z_j)\le\delta_\varepsilon
\tag{D.3}
\]
for a fixed $\delta_\varepsilon>0$ chosen below.

If $n\le M$, use any $n$ centres.  Their attention matrix is within
$2\delta_\varepsilon/(1+\delta_\varepsilon)$ in maximum row-$\ell_1$ of the
identity.  If $n>M$, duplicate the $M$ centres into balanced clusters of
sizes $\lfloor n/M\rfloor$ or $\lceil n/M\rceil$.  The ratio of any two
nonzero cluster sizes is at most two, so (D.3) shows that each row places all
but $O(\delta_\varepsilon)$ of its mass uniformly on its own cluster.  Thus
the attention matrix is within a chosen $\delta<1-\varepsilon$ of the
rank-$M$ block-averaging projection $P$.

Select one row from each cluster with an $M\times n$ matrix $S$, and let the
$n\times M$ cluster-indicator decoder be $D$.  Then $SPD=I_M$.  If $B$ is any
maximum-row-$\ell_1$ $\varepsilon$-approximant to $A$, multiplication by $S$
and the decoder $D$ is an $\ell_1$ contraction, so
\[
\|SBD-I_M\|_{\infty\to\infty}\le\varepsilon+\delta<1.
\tag{D.4}
\]
Hence $SBD$ is invertible by the Neumann series and
$\operatorname{rank}(B)\ge M$.  This proves the lower bound for $\beta\ge1$,
including nondivisible $n$.  For bounded $\beta$, the lower bound in (D.1) is
a constant and rank one supplies the matching order.  At $\beta=0$, rank one
is exact.
\end{proof}

\section{Full-ball finite lower theorem and interaction-exponent sharpness}
\label{app:fullball}

Let $\mathfrak R_\varepsilon(n,d,\beta)$ be the supremum of
$r_\varepsilon(A)$ over self-attention configurations with all tokens in
$B_2^d$.

\begin{appendixtheorem}[Finite full-ball lower certificate]
Fix $d\ge1$ and $0<\varepsilon<1$.  There are constants
$c_{d,\varepsilon},C_{d,\varepsilon},C_d>0$ such that for $\beta\ge1$,
\[
\mathfrak R_\varepsilon(n,d,\beta)
 \ge c_{d,\varepsilon}\min\left\{n,
 \left(1+\min\left\{\beta,\log_+\frac n{C_d}\right\}\right)^{d/2}
 \right\},
\tag{E.1}
\]
after adjusting the fixed constants for the finitely many below-threshold
integer cases, where $\log_+x=\max\{0,\log x\}$.  The upper bound
\[
\mathfrak R_\varepsilon(n,d,\beta)
 \le\min\left\{n,C'_d\left(1+\frac\beta{\varepsilon^2}\right)^{d/2}\right\}
\tag{E.2}
\]
follows from the full-ball lift.
\end{appendixtheorem}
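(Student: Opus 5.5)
The upper bound (E.2) is immediate from the weighted full-ball lift (A.8): take $r=d$, uniform weights, $\gamma=\beta$ (query and key radii at most one), and the trivial cap $n$. The work is the lower bound (E.1). I would reuse the decoder/Neumann-series argument from the spherical law. The goal is to build $M$ ``centres'' in $B_2^d$, duplicated into balanced clusters, whose self-attention matrix is within $\delta<1-\varepsilon$ (maximum row-$\ell_1$) of the rank-$M$ block-averaging projection. The selection/decoder step (D.4) then forces $\operatorname{rank}(B)\ge M$ for any $\varepsilon$-approximant $B$. What remains is to produce $M\gtrsim_{d,\varepsilon}(1+\min\{\beta,\log_+(n/C_d)\})^{d/2}$ such centres, using at most $n$ tokens.

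\textbf{Construction.} The key difference from the sphere is that self-attention in the ball is not self-localizing. A token $x$ attends most to the key maximizing $\langle x,k\rangle$, which is usually on the boundary, not to itself. I would therefore use multiplicity as an extra weight. A cluster at centre $z$ of size $m_z$ gives row logits $\beta\langle z,z'\rangle+\log m_{z'}$. If we set $\log m_z=\beta(c-\tfrac12\|z\|^2)$, the logit becomes $\beta c-\tfrac\beta2\|z'\|^2+\beta\langle z,z'\rangle$, which equals $-\tfrac\beta2\|z-z'\|^2$ up to a row constant. Row $z$ then becomes a Gaussian-kernel weighting centred at $z$ itself. A $\beta^{-1/2}$-separated family in $B_2^d$ (or an energy-thinned random family, as in (D.3), now with $I(\beta)\asymp\beta^{-d/2}$ from the Gaussian integral) has $M\asymp\beta^{d/2}$ points with $\sum_{z'\ne z}e^{-\beta\|z-z'\|^2/2}\le\delta_\varepsilon$. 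This makes each row concentrate on its own cluster, and within the cluster the mass is uniform, so the matrix is close to block averaging. Clusters of equal centre radius have equal size up to rounding, and rounding $m_z$ only perturbs logits by $O(1)$, which I absorb by the factor-two argument used for nondivisible $n$.

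\textbf{Token budget and truncation.} Cluster sizes range up to $e^{\beta/2}$ times the smallest, so the total token count is about $e^{\beta/2}\cdot M$. When $n\ge C_de^{\beta/8}$ fails, I would instead run the construction at a reduced effective inverse temperature. I would use only the shell $\|z\|\ge\rho$, so that the size ratio is at most $e^{\beta(1-\rho^2)/2}$, with $\beta(1-\rho^2)$ chosen about $\log(n/C_d)$. The key observation is that the annulus $\{\rho\le\|z\|\le1\}$ of width $\sim\log(n)/\beta$ still contains about $(\log n)^{d/2}$ Gaussian-separated points when the radial degree is resolved at scale $\beta^{-1/2}$. Roughly, $\beta^{(d-1)/2}$ tangential cells times $\sqrt{\beta}\cdot(1-\rho)$ radial cells gives $\beta^{d/2}(1-\rho)$. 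This gives a count of order $\min\{\beta,\log(n/C_d)\}^{d/2}$ once the tangential count is reduced correspondingly, by rescaling the shell to an effective temperature $\beta'=\min\{\beta,\log(n/C_d)\}$ via a uniform contraction of all centres ($z\mapsto\lambda z$ multiplies $\beta$ by $\lambda^2$). So I would simply contract the whole configuration by $\lambda=\sqrt{\beta'/\beta}$ and run the $\beta'$ construction. This is cleaner than the shell argument, and the budget is then $e^{\beta'/2}M\lesssim n$, with the constant $1/8$ in (3) leaving slack. Finally, if $n<M$ one takes fewer centres, giving the $\min\{n,\cdot\}$, and bounded $\beta$ or $n$ is handled by rank one with adjusted constants.

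\textbf{Main obstacle.} The delicate step is the concentration estimate under integer rounding and the exponentially varying cluster sizes. I must check that each row's off-cluster mass, including contributions from the much larger near-boundary clusters, is at most $\delta_\varepsilon$ uniformly. The completed-square identity handles this exactly up to rounding, but the energy-thinning must be done with respect to the Gaussian kernel on the ball, including boundary effects where the ball's volume near $z$ is halved. That only changes constants, so I would verify it through the Laplace bound $\int_{B_2^d}e^{-\beta\|z-y\|^2/2}dy\asymp_d\beta^{-d/2}$, uniformly in $z\in B_2^d$.
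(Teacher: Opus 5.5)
Your proposal is correct and follows the paper's argument in all essentials. Multiplicities proportional to $e^{-\beta\|z\|^2/2}$ complete the square, so each row becomes a Gaussian centred at its own query; the selection/decoder Neumann-series step then gives $\operatorname{rank}(B)\ge M$; and your contraction to effective temperature $\beta'=\min\{\beta,\log(n/C_d)\}$ is the same geometric operation as the paper's choice of the box parameter $b$ in (E.10).

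The differences are only cosmetic. You place clusters only at well-separated centres and decode by cluster membership, whereas the paper fills the whole $\beta^{-1/2}$-lattice with token groups, takes messages at spacing $L$, and decodes to the nearest message using a discrete-Gaussian tail bound. One small correction: the centre separation must be $L\beta^{-1/2}$ with a large constant $L=L(d,\varepsilon)$, because exact $\beta^{-1/2}$ spacing leaves neighbour weight $e^{-1/2}$, which is not small.
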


\begin{proof}
Choose an even integer $L=L(d,\varepsilon)$ so large that a standard discrete
Gaussian $G\propto e^{-\|u\|_2^2/2}$ on $\mathbb Z^d$ obeys
\[
\Pr(\|G\|_\infty\ge L/2)\le\delta/4,
\qquad \varepsilon+2\delta<1.
\tag{E.3}
\]
Choose $0<b\le1/(2\sqrt d)$ and lattice groups
\[
z_u=\frac u{\sqrt\beta},
\qquad u\in\mathbb Z^d,
\qquad \|u\|_\infty\le b\sqrt\beta.
\tag{E.4}
\]
All tokens lie in $(1/2)B_2^d$.  Assign ideal multiplicities
\[
w_u=\exp\!\left(\frac{\beta d b^2-\|u\|_2^2}{2}\right)\ge1
\tag{E.5}
\]
and integer multiplicities $m_u=\lceil w_u\rceil$, so
$w_u\le m_u\le2w_u$.  The base token count satisfies
\[
N_0=\sum_um_u\le C_de^{\beta d b^2/2}.
\tag{E.6}
\]
For arbitrary $n\ge N_0$, take $T=\lfloor n/N_0\rfloor$ whole copies of all
groups and distribute the remaining prefix arbitrarily.  The resulting
multiplicities obey
\[
Tw_u\le m'_u\le4Tw_u.
\tag{E.7}
\]
Use message indices $v\in\mathbb Z^d$ whose success cubes remain inside the
box:
$\|Lv\|_\infty\le b\sqrt\beta-L/2$.  For each such $v$, take the existing
token $q_v=z_{Lv}=Lv/\sqrt\beta$ as the message query.  For the ideal weights,
\[
w_ue^{\beta\langle q_v,z_u\rangle}
 =\exp\!\left(\frac{\beta d b^2}{2}
   +\frac{L^2\|v\|_2^2}{2}\right)
  e^{-\|u-Lv\|_2^2/2}.
\tag{E.8}
\]
Thus, apart from a factor independent of $u$, the row is a translated
discrete standard Gaussian.  Equation (E.7) changes all probability ratios
by at most a universal factor four, so (E.3) bounds the probability of leaving
the nearest-message Voronoi cell by $\delta$.  Every selected message centre
is at least $L/2$ from the box boundary.  Truncation therefore preserves its
entire success cube and can remove only failure points, so it cannot increase
the error.  Ties are charged to the error event.

There are
\[
M\asymp_d(b\sqrt\beta/L)^d
 \asymp_{d,\varepsilon}(\beta b^2)^{d/2}
\tag{E.9}
\]
message rows.  Decode each token group to its nearest message.  The selected
$M\times M$ channel is within row-$\ell_1$ distance $2\delta$ of $I_M$.  For
any $\varepsilon$-approximant $B$, selection and decoding are contractions,
so the decoded matrix differs from $I_M$ by at most
$\varepsilon+2\delta<1$ in $\ell_\infty$ operator norm.  It is invertible and
$\operatorname{rank}(B)\ge M$.

Finally choose, above the fixed lattice threshold,
\[
b^2=\min\left\{\frac1{4d},\frac2{\beta d}\log\frac n{C_d}\right\}.
\tag{E.10}
\]
Equations (E.6), (E.9), and integer-floor adjustments give (E.1).  The upper
bound (E.2) is the full-ball lift.
\end{proof}

\begin{appendixcorollary}[Large-token $d/2$ law]
For fixed $d\ge1$ and $0<\varepsilon<1$, there is
$\beta_0(d,\varepsilon)$ such that whenever
\[
\beta\ge\beta_0(d,\varepsilon),
\qquad n\ge C_de^{\beta/8},
\tag{E.11}
\]
one has
\[
\mathfrak R_\varepsilon(n,d,\beta)
 \asymp_{d,\varepsilon}\beta^{d/2}.
\tag{E.12}
\]
\end{appendixcorollary}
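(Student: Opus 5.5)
The corollary follows by combining the finite lower certificate (E.1) with the full-ball upper bound (E.2) and checking that, under (E.11), both sides are of order $\beta^{d/2}$.

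\textbf{Upper bound.} By (E.2), $\mathfrak R_\varepsilon(n,d,\beta)\le C'_d(1+\beta/\varepsilon^2)^{d/2}$. For $\beta\ge1$ we have $1+\beta/\varepsilon^2\le 2\beta/\varepsilon^2$, so this is at most $C'_d(2/\varepsilon^2)^{d/2}\beta^{d/2}$. The constant depends only on $d$ and $\varepsilon$, so the upper half of (E.12) holds with no condition on $n$.

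\textbf{Lower bound.} Here (E.11) is used through (E.1). The only point needing care is consistency of the constant $C_d$ between (E.11) and (E.1). I will take $C_d$ in (E.11) to be the constant from (E.1)/(E.6), enlarged if necessary; enlarging it only strengthens the hypothesis.

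With this choice, $n\ge C_de^{\beta/8}$ gives $\log_+(n/C_d)\ge\beta/8$. Hence
\[
\min\{\beta,\log_+(n/C_d)\}\ge\beta/8,
\]
and the second argument of the outer minimum in (E.1) is at least $(\beta/8)^{d/2}$.

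The first argument is $n$, and $n\ge C_de^{\beta/8}$. Choose $\beta_0(d,\varepsilon)$ large enough that $C_de^{\beta/8}\ge(1+\beta)^{d/2}$ for all $\beta\ge\beta_0$. Then the outer minimum is at least $(\beta/8)^{d/2}$.

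The threshold $\beta_0$ must also clear the ``fixed lattice threshold'' and the finitely many below-threshold cases mentioned in (E.1) and (E.10). Concretely, it must guarantee that the message box contains at least one message and that the $\asymp$ in (E.9) is valid. Enlarging $\beta_0$ handles this, since these thresholds depend only on $d$ and $\varepsilon$. The result is $\mathfrak R_\varepsilon\ge c_{d,\varepsilon}8^{-d/2}\beta^{d/2}$.

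\textbf{Main obstacle.} The substantive work is already in Theorem (E.1). The only real issue is bookkeeping: in (E.10) the choice $b^2=\min\{1/(4d),\dots\}$ saturates at $1/(4d)$ exactly when $\log(n/C_d)\ge\beta/8$, since $\tfrac{2}{\beta d}\cdot\tfrac{\beta}{8}=\tfrac1{4d}$. This explains the exponent $\beta/8$ in (E.11). I would verify it directly: with $b^2=1/(4d)$, (E.6) requires $N_0\le C_de^{\beta/8}\le n$, so the construction fits, and (E.9) gives $M\asymp(\beta/(4d))^{d/2}$. This confirms the lower bound independently of how (E.1) packages its constants.
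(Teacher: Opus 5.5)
Your proposal is correct and follows the paper's own argument. Under (E.11) you take $b^2=1/(4d)$ in (E.10), so (E.6) needs only $N_0\le C_de^{\beta/8}\le n$ tokens and (E.9) gives $M\asymp_{d,\varepsilon}\beta^{d/2}$; combining this with the unconditional upper bound (E.2), using $1+\beta/\varepsilon^2\le 2\beta/\varepsilon^2$ for $\beta\ge1$, gives (E.12). Your extra pass through the packaged bound (E.1), with $\beta_0$ enlarged so that $C_de^{\beta/8}\ge(1+\beta)^{d/2}$ and the lattice threshold behind (E.9) is cleared, is a redundant but harmless check of the same computation.
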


\begin{proof}
Under (E.11), take $b=1/(2\sqrt d)$ in (E.10).  Then (E.6) requires only
$C_de^{\beta/8}$ tokens and (E.9) gives
$M\asymp_{d,\varepsilon}\beta^{d/2}$.  Combine with (E.2).
\end{proof}

The corollary is a worst-case state-capacity theorem.  In the grouped
construction, the largest message set is $M=17{,}850{,}625$ while the
analytically represented effective token count is about $10^{88{,}946}$.
Grouping makes this asymptotic scale tractable without materializing the
duplicate tokens.

\subsection{Sharpness of the interaction-dimension exponent}

\begin{appendixtheorem}[Sharp interaction exponent]
\label{app:interaction-sharp}
Fix $r\ge1$ and $0<\varepsilon<1$.  There are constants
$c_{r,\varepsilon},C_{r,\varepsilon},\beta_0>0$ such that for every
$\beta\ge\beta_0$ there are
$n\asymp_{r,\varepsilon}\beta^{r/2}$ queries and keys in $B_2^{r+1}$ whose
attention matrix $A$ has exact visible interaction dimension $r$ and shape
parameter $\Gamma=\beta\rho_Q\rho_K\asymp_r\beta$, while
\[
r_\varepsilon(A)\ge c_{r,\varepsilon}\beta^{r/2}
 \asymp_{r,\varepsilon}\Gamma^{r/2}.
\tag{E.13}
\]
Consequently the $r/2$ exponent in the exact interaction upper bound cannot
be replaced by any smaller exponent in a uniform fixed-$r$ theorem.
\end{appendixtheorem}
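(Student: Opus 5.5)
Our plan is to reuse the discrete-Gaussian lattice channel from the finite full-ball lower certificate, with two changes.

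\textbf{Replacing integer multiplicities.} In that construction the key weights $w_u$ were realised by integer token multiplicities, which forced $n\gtrsim e^{\beta/8}$. We will instead realise them as key-only logit terms through one extra coordinate, which is why we work in $\mathbb R^{r+1}$.

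\textbf{Construction.} Take lattice points $z_u=u/\sqrt\beta\in\mathbb R^r$ with $u\in\mathbb Z^r$ and $\|u\|_\infty\le b\sqrt\beta$, where $b$ is a small dimensional constant. The number of such points is $n\asymp_r\beta^{r/2}$. Put
\[
k_u=(z_u,\kappa_u),\qquad \kappa_u=-\frac{\|u\|_2^2}{2\beta}+\mathrm{const}.
\]
We have $|\kappa_u|\le rb^2/2$, so after a shift every $k_u$ lies in $B_2^{r+1}$ for small $b$. Take the queries $q_u=(z_u,1/\sqrt2)$ for the same index set, scaled if needed to stay in the ball.

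\textbf{The rows.} The logits are
\[
\beta\langle q_v,k_u\rangle=\langle v,u\rangle-\tfrac12\|u\|_2^2+\mathrm{const}_v .
\]
So, up to the row-scalar factor, row $v$ is $\propto e^{-\|u-v\|^2/2}$, a translated discrete Gaussian. (If the $1/\sqrt2$ scaling of the last coordinate leaves a factor, we rescale $\kappa$ to compensate.)

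\textbf{Quotient parameters.} The differences $q_u-q_1$ span $U=\mathbb R^r\times\{0\}$. Hence $s_u=P_U(k_u-k_1)=z_u-z_1$ spans $V=U$, and the visible dimension is exactly $r$. The last coordinate of the keys is projected away, which is exactly the "key-only terms become base weights" mechanism. Both $\rho_Q$ and $\rho_K$ are Chebyshev radii of a cube of side $\asymp b$, so they are $\asymp_r b$. With $b$ fixed this gives $\Gamma\asymp_r\beta$, and also $n\asymp_{r,\varepsilon}\beta^{r/2}$ once $L$ below is fixed.

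\textbf{Lower bound.} We copy the decoder argument of the full-ball theorem.
\begin{itemize}
\item Choose an even $L=L(r,\varepsilon)$ satisfying (E.3).
\item Select message rows $v\in L\mathbb Z^r$ whose success cubes lie at distance at least $L/2$ from the boundary. There are $M\asymp(b\sqrt\beta/L)^r\asymp_{r,\varepsilon}\beta^{r/2}$ of them.
\item Decode each key index to its nearest message.
\end{itemize}
Each message row is an exact translated discrete Gaussian truncated to the box. Truncation only removes failure mass, so the selected $M\times M$ decoded channel is within $2\delta$ of $I_M$. For any $\varepsilon$-approximant $B$, selection and decoding are $\ell_\infty$ contractions, so the decoded matrix differs from $I_M$ by at most $\varepsilon+2\delta<1$. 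By the Neumann series it is invertible, which gives $\operatorname{rank}(B)\ge M$. This proves (E.13). The consequence is then immediate: any fixed-$r$ upper bound of order $\Gamma^{r'/2}$ with $r'<r$ would contradict (E.13) as $\beta\to\infty$.

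\textbf{Main obstacle.} The delicate step is bookkeeping the embedding so that three things hold at once:
\begin{itemize}
\item every query and key lies in $B_2^{r+1}$;
\item the quadratic key-only term has exactly the coefficient that centres the Gaussian at $v$ with unit variance;
\item the query and key radii remain of order one, so $\Gamma\asymp\beta$ rather than being degraded by the scaling.
\end{itemize}
Our first attempt is to fix the last query coordinate as a constant $\alpha$ and set $\kappa_u=-\|u\|^2/(2\alpha\beta)$. This needs $rb^2/(2\alpha)\lesssim1$, which holds for a small $b=b(r)$. Since $b$ is a fixed dimensional constant, it does not affect the orders of $n$, $\Gamma$ or $M$.
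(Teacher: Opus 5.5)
Your proposal is correct and is essentially the paper's proof. The paper uses exactly the keys $k_u=(u/\sqrt\beta,-\|u\|_2^2/\beta)$ with last query coordinate $1/2$ (your $\alpha=1/2$), so each message row is a translated discrete Gaussian on the lattice box; it then runs the same select--decode--Neumann-series argument and the same quotient computation $U=V=\mathbb R^r\times\{0\}$ with order-one Chebyshev radii. The only difference is cosmetic: the paper takes just the message queries $q_v=(Lv/\sqrt\beta,1/2)$ and pads to $n=N$ by repeating $q_0$, while you place a query at every lattice point and select the rows indexed by $L\mathbb Z^r$, which changes nothing in the argument.
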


\begin{proof}
Choose an even $L=L(r,\varepsilon)$ and $\delta>0$ so that for a standard
discrete Gaussian $G$ on $\mathbb Z^r$,
\[
\Pr(\|G\|_\infty\ge L/2)\le\delta,
\qquad \varepsilon+2\delta<1.
\tag{E.14}
\]
Set $b=1/(2\sqrt r)$ and $a=\lfloor b\sqrt\beta\rfloor$.  Index the keys by
$u\in\mathbb Z^r$ with $\|u\|_\infty\le a$ and define
\[
k_u=\left(\frac u{\sqrt\beta},-\frac{\|u\|_2^2}{\beta}\right)
 \in\mathbb R^{r+1}.
\tag{E.15}
\]
Index message queries by $v\in\mathbb Z^r$ satisfying
$\|Lv\|_\infty\le a-L/2$ and set
\[
q_v=\left(\frac{Lv}{\sqrt\beta},\frac12\right).
\tag{E.16}
\]
There are $N=(2a+1)^r\asymp_r\beta^{r/2}$ keys and
$M\asymp_{r,\varepsilon}\beta^{r/2}$ message queries.  Pad the query list to
$n=N$ by repeating $q_0$.  Since $\|u\|_2^2/\beta\le rb^2=1/4$,
\[
\|k_u\|_2^2\le rb^2+r^2b^4=5/16,
\qquad
\|q_v\|_2^2\le rb^2+1/4=1/2,
\]
so all vectors lie in $B_2^{r+1}$.

For a message row $v$,
\begin{align*}
\beta\langle q_v,k_u\rangle
 &=Lv\cdot u-\frac{\|u\|_2^2}{2}\\
 &=\frac{L^2\|v\|_2^2}{2}-\frac{\|u-Lv\|_2^2}{2}.
\end{align*}
Hence the key distribution in that row is exactly a translated discrete
Gaussian conditioned on the lattice box.  The cube
$\{u:\|u-Lv\|_\infty<L/2\}$ lies inside the box, so conditioning can only
increase its probability; by (E.14) that probability is at least
$1-\delta$.

Decode each key to its nearest selected message centre, assigning keys
outside all success cubes arbitrarily.  Let $D$ be the resulting $N\times M$
one-hot decoder and let $S$ select the $M$ message-query rows.  Then
\[
\|SAD-I_M\|_{\infty\to\infty}\le2\delta.
\tag{E.17}
\]
If $B$ is any maximum-row-$\ell_1$ $\varepsilon$-approximant to $A$, right
multiplication by $D$ and row selection are $\ell_1$ contractions, and hence
\[
\|SBD-I_M\|_{\infty\to\infty}
 \le\varepsilon+2\delta<1.
\]
Thus $SBD$ is invertible and
$\operatorname{rank}(B)\ge M\asymp_{r,\varepsilon}\beta^{r/2}$.

Order $q_0$ and $k_0$ first.  The query-difference span is
$U=\mathbb R^r\times\{0\}$, while
$P_U(k_u-k_0)=(u/\sqrt\beta,0)$ spans the same space, so the exact visible
interaction dimension is $r$.  Both projected point sets are symmetric
lattice boxes.  For all sufficiently large $\beta$, their Chebyshev radii
are bounded above and below by positive constants (for example,
$1/8\le\rho_Q\le1/2$ and $1/4\le\rho_K\le1/2$ after increasing
$\beta_0$).  Therefore $\Gamma=\beta\rho_Q\rho_K\asymp_r\beta$, which
converts the lower bound to the final form in (E.13).
\end{proof}

\section{Learned-head certificate chain}
\label{app:learned}

For a fixed learned head and candidate dimension $r$, the quantities entering
the robust upper bound form the deterministic chain
\[
\begin{gathered}
(r_{\mathrm{exact}},r,\sigma_{r+1},\delta_{Q,r},\omega_{K,r})
 \longrightarrow \tau_r^{\mathrm{width}}
 \longrightarrow (\rho_{Q,r},\rho_{K,r},\Gamma_r)\\
 \longrightarrow \log S_r
 \longrightarrow \text{finite constructive certificates}.
\end{gathered}
\tag{F.1}
\]
Specifically,
\[
\tau_r^{\mathrm{width}}
 =2\tanh(\beta\delta_{Q,r}\omega_{K,r}/2),
\qquad
\Gamma_r=\beta\rho_{Q,r}\rho_{K,r},
\qquad
\log S_r=\frac r2\log\!\left(1+
 \frac{\Gamma_r}{(\varepsilon-\tau_r^{\mathrm{width}})^2}\right).
\tag{F.2}
\]
The attention-SVD and representative-row quantities are constructive rank
upper certificates at maximum row-$\ell_1$ error $\varepsilon$.  The top-$k$
quantity is a sparsity budget.

The available BERT summary at multiplier one and $\varepsilon=0.25$ is an
aggregate calibration over a fixed 84-head evaluation set; no population-level
inference is intended.  Exact visible dimensions are $63$--$64$, median
$r_{\mathrm{int}}^{\mathrm{SVD}}(\varepsilon/3)=64$, median attention-SVD
upper certificate $48$, median representative-row upper certificate $87.5$,
and median top-$k$ sparsity budget $45$.  Aggregate medians do not determine
a single tuple
$(\sigma_{r+1},\delta_{Q,r},\omega_{K,r},\Gamma_r,\log S_r)$ for one
representative head, so the deterministic chain (F.1)--(F.2) and the aggregate
calibration are reported separately.  In particular,
\[
\text{symbolic theorem shape}\ne
\text{finite constructive rank upper certificate}.
\]

\section{Relation to neighboring theory}
\label{app:related}

The proof combines several established ingredients.  \citet{cohenfausti}
give the sharp Hilbert/projective total-variation comparison used in the
softmax perturbation lemma; exponential-family KL/Bregman geometry is
classical \citep{banerjee}; and the $(m-1)/2$ convex-body approximation
exponent has modern quantitative accounts in \citet{aryamount,aryacover}.

Broader kernel-rank precedents include \citet{altschulerparrilo}, where
approximation rank can depend on intrinsic variety dimension rather than
ambient dimension, and \citet{budzinskiy}, which gives entrywise low-rank
approximations for several classes of function-generated matrices, including
inner-product functions, and discusses transformer attention.  Thus the broad
principle that geometry or intrinsic dimension can reduce kernel rank is not a
novelty claim here.

Rigorous attention approximations also use more restricted representation
classes.  \citet{lowrankthinning} convert approximate low rank into a thinning
construction with attention guarantees; \citet{wildcat} obtain a weighted
coreset with near-linear runtime under bounded inputs; and \citet{coresets}
prove nearly optimal uniform-query attention coresets.  These results address
algorithmic or subset-based approximations rather than unrestricted rank of the
normalized output matrix under maximum-row-$\ell_1$ error.

On interaction structure, \citet{pan2024} analyze the weight-space
$W_Q^\top W_K$ by SVD, \citet{leeinvariants} studies row-centred
logit/energy-field invariants and their rank and spectral structure, and
\citet{leecompress} proves per-row softmax fidelity for truncated
row-centred-logit SVD under a singular-vector delocalization condition.
\citet{dof} chooses linear-attention feature dimension through a statistical
degrees-of-freedom criterion, and \citet{yoon} defines a task-intrinsic
attention-native rank for a different capacity/realizability problem.

The narrow theorem package here controls a different object: unrestricted
uniform-output approximation rank of normalized attention.  It combines the
maximum-row-$\ell_1$ operator criterion, sharp sphere/full-ball temperature
laws, the exact row-softmax-visible query--key quotient with minimax-sharp
$r/2$ exponent, and a robust projective-residual extension.  SVD supplies a
reproducible corollary for selecting nested candidate subspaces; it is not
claimed to optimize the robust bound.

\clearpage
\bibliographystyle{plainnat}
\bibliography{references}

@inproceedings{alman,
  author    = {Josh Alman and Zhao Song},
  title     = {Fast Attention Requires Bounded Entries},
  booktitle = {Advances in Neural Information Processing Systems},
  year      = {2023}
}

@article{aryacover,
  author  = {Sunil Arya and Guilherme D. da Fonseca and David M. Mount},
  title   = {Economical Convex Coverings and Applications},
  journal = {SIAM Journal on Computing},
  volume  = {53},
  number  = {4},
  pages   = {1002--1038},
  year    = {2024},
  doi     = {10.1137/23M1568351},
  note    = {arXiv:2303.08349}
}

@inproceedings{aryamount,
  author    = {Sunil Arya and David M. Mount},
  title     = {Optimal Volume-Sensitive Bounds for Polytope Approximation},
  booktitle = {39th International Symposium on Computational Geometry (SoCG 2023)},
  series    = {Leibniz International Proceedings in Informatics (LIPIcs)},
  volume    = {258},
  pages     = {9:1--9:16},
  year      = {2023},
  doi       = {10.4230/LIPIcs.SoCG.2023.9},
  note      = {arXiv:2303.09586}
}

@article{altschulerparrilo,
  author  = {Jason M. Altschuler and Pablo A. Parrilo},
  title   = {Kernel Approximation on Algebraic Varieties},
  journal = {SIAM Journal on Applied Algebra and Geometry},
  volume  = {7},
  number  = {1},
  pages   = {1--28},
  year    = {2023},
  doi     = {10.1137/21M1425050}
}

@article{banerjee,
  author  = {Arindam Banerjee and Srujana Merugu and Inderjit S. Dhillon and Joydeep Ghosh},
  title   = {Clustering with {Bregman} Divergences},
  journal = {Journal of Machine Learning Research},
  volume  = {6},
  number  = {58},
  pages   = {1705--1749},
  year    = {2005}
}

@article{budzinskiy,
  author  = {Stanislav Budzinskiy},
  title   = {When Big Data Actually Are Low-Rank, or Entrywise Approximation of Certain Function-Generated Matrices},
  journal = {SIAM Journal on Mathematics of Data Science},
  volume  = {7},
  number  = {3},
  pages   = {1098--1122},
  year    = {2025},
  doi     = {10.1137/24M1687133}
}

@inproceedings{lowrankthinning,
  author    = {Annabelle Michael Carrell and Albert Gong and Abhishek Shetty and Raaz Dwivedi and Lester Mackey},
  title     = {Low-Rank Thinning},
  booktitle = {Proceedings of the 42nd International Conference on Machine Learning},
  series    = {Proceedings of Machine Learning Research},
  volume    = {267},
  pages     = {6811--6848},
  year      = {2025}
}

@article{cohenfausti,
  author  = {Samuel N. Cohen and Eliana Fausti},
  title   = {Hyperbolic Contractivity and the {Hilbert} Metric on Probability Measures},
  journal = {arXiv preprint arXiv:2309.02413},
  year    = {2023},
  eprint  = {2309.02413},
  archivePrefix = {arXiv}
}

@article{leecompress,
  author  = {Wonsuk Lee},
  title   = {Compressible Softmax-Attended Language under Incompressible Attention},
  journal = {arXiv preprint arXiv:2604.04384},
  year    = {2026},
  eprint  = {2604.04384},
  archivePrefix = {arXiv}
}

@article{leeinvariants,
  author  = {Wonsuk Lee},
  title   = {On the Invariants of Softmax Attention},
  journal = {arXiv preprint arXiv:2605.02907},
  year    = {2026},
  eprint  = {2605.02907},
  archivePrefix = {arXiv}
}

@article{coresets,
  author  = {Edo Liberty and Alexandr Andoni and Eldar Kleiner},
  title   = {Nearly Optimal Attention Coresets},
  journal = {arXiv preprint arXiv:2605.05602},
  year    = {2026},
  eprint  = {2605.05602},
  archivePrefix = {arXiv}
}

@inproceedings{dof,
  author    = {Naoki Nishikawa and Rei Higuchi and Taiji Suzuki},
  title     = {Degrees of Freedom for Linear Attention: Distilling Softmax Attention with Optimal Feature Efficiency},
  booktitle = {Advances in Neural Information Processing Systems},
  year      = {2025},
  note      = {arXiv:2507.03340}
}

@inproceedings{pan2024,
  author    = {Xu Pan and Aaron Philip and Ziqian Xie and Odelia Schwartz},
  title     = {Dissecting Query-Key Interaction in Vision Transformers},
  booktitle = {Advances in Neural Information Processing Systems},
  year      = {2024}
}

@book{schneider,
  author    = {Rolf Schneider},
  title     = {Convex Bodies: The Brunn--Minkowski Theory},
  edition   = {Second Expanded},
  publisher = {Cambridge University Press},
  year      = {2014}
}

@inproceedings{wildcat,
  author    = {Tobias Schr{\"o}der and Lester Mackey},
  title     = {{WildCat}: Near-Linear Attention in Theory and Practice},
  booktitle = {Proceedings of the 43rd International Conference on Machine Learning},
  year      = {2026}
}

@article{yoon,
  author  = {Byeong Hoon Yoon},
  title   = {The Entropic Bound for Transformers: Why Static Rank Fails and Attention-Native Rank Recovers},
  journal = {arXiv preprint arXiv:2607.23050},
  year    = {2026},
  eprint  = {2607.23050},
  archivePrefix = {arXiv}
}

\end{document}